\theoremstyle{plain}
\newtheorem{theorem}{Theorem}[section]
\theoremstyle{definition}
\theoremstyle{remark}
\newtheorem{remark}[theorem]{Remark}
\icmltitlerunning{Nerflix And Severance}
\begin{document}

\twocolumn[
\icmltitle{Netflix and Forget: \\Efficient and Exact Machine Unlearning from Bi-linear Recommendations}

% List of affiliations: The first argument should be a (short)
% identifier you will use later to specify author affiliations
% Academic affiliations should list Department, University, City, Region, Country
% Industry affiliations should list Company, City, Region, Country

% You can specify symbols, otherwise they are numbered in order.
% Ideally, you should not use this facility. Affiliations will be numbered
% in order of appearance and this is the preferred way.
\icmlsetsymbol{equal}{*}

\begin{icmlauthorlist}
\icmlauthor{Mimee Xu}{sch}
\icmlauthor{Jiankai Sun}{comp}
\icmlauthor{Xin Yang}{comp}
\icmlauthor{Kevin Yao}{comp}
\icmlauthor{Chong Wang}{comp}
\end{icmlauthorlist}

\icmlaffiliation{comp}{Bytedance Inc., Bellevue, WA, USA}
\icmlaffiliation{sch}{Courant Institute of Mathematical Sciences, New York University, NY, USA}
%Work done during internship at ByteDance.
\icmlcorrespondingauthor{Mimee Xu}{mimee@nyu.edu}

% You may provide any keywords that you
% find helpful for describing your paper; these are used to populate
% the "keywords" metadata in the PDF but will not be shown in the document
\icmlkeywords{Machine Learning, Machine Unlearning, Applications, Recommendations}

\vskip 0.3in
]

% this must go after the closing bracket ] following \twocolumn[ ...

% This command actually creates the footnote in the first column
% listing the affiliations and the copyright notice.
% The command takes one argument, which is text to display at the start of the footnote.
% The \icmlEqualContribution command is standard text for equal contribution.
% Remove it (just {}) if you do not need this facility.

\printAffiliationsAndNotice{}  % leave blank if no need to mention equal contribution
%\printAffiliationsAndNotice{\icmlEqualContribution} % otherwise use the standard text.

\begin{abstract}
%Suppose a person, who has streamed rom-coms exclusively with their significant other, suddenly breaks up.
%Consider an expecting mom, who has shopped for baby clothes, miscarries.
%Their streaming and shopping recommendations, however, do not necessarily update, serving as unhappy reminders of their loss.

People break up, miscarry, and lose loved ones.
Their online streaming and shopping recommendations, however, do not necessarily update, and may serve as unhappy reminders of their loss.
When users want to renege on their past actions, they expect the recommender platforms to erase selective data at the model level.
Ideally, given any specified user history, the recommender can unwind or "forget", as if the record was not part of training.
To that end, this paper focuses on simple but widely deployed bi-linear models for recommendations based on matrix completion.
Without incurring the cost of re-training, and without degrading the model unnecessarily, we develop Unlearn-ALS by making a few key modifications to the fine-tuning procedure under Alternating Least Squares optimisation, thus applicable to any bi-linear models regardless of the training procedure.
We show that Unlearn-ALS is consistent with retraining without \emph{any} model degradation and exhibits rapid convergence, making it suitable for a large class of existing recommenders.

%Motivated by real-world privacy, we seek empirical methods to implement
%the Right To Be Forgotten, specifically for recommendation systems built through collaborative filtering.
%The goal is to update downstream recommendations to reflect the removal of random training data without incurring the cost of re-training.

%Our method, Unlearn-ALS, is inspired by bi-linear solutions  and can be applied directly to trained bi-linear recommendations regardless of the training procedure. It is \emph{exact}: consistent with retraining in theory without any approximation or model degradation, while showing fast convergence in practice.

\end{abstract}
%We assume that users may retract consent to having some of their data included in the training of machine learning models, and that technology platforms will be compelled to remove their influence.

\section{Introduction}
Break-ups, pregnancy losses, and bereavements are particularly painful in the age of ubiquitous machine learning systems.
Per General Data Protection Regulation (GDPR), an individual may request their personal data erased under the right to erasure, or "Right To Be Forgotten"~\citep{EUdataregulations2018, voigt2017eu}.

%Suppose a user watches a Korean drama with their significant other but breaks %up mid-season. Signing into Netflix, they are notified of new episodes of the %show that they started with their ex-partner.
%While browsing for something new, they are suggested TV series that feature
%the same actors as the unfinished show. If their taste for "K-Drama" was only %through the relationship, encountering the sub-genre may cause distress.
%To move on, the user may reasonably demand some past records expunged from %Netflix's recommendation engines~\citep{thompson_2021}.

Suppose a Netflix user watches a Korean drama with their significant other but breaks up mid-season. They are subsequently bombarded with new episode alerts and recommended shows with the same actors and art styles, potentially causing distress. To move on, the user may wish that Netflix recommenders expunge some of their watch history. %past records from recommendation engines.%~\citep{thompson_2021}.

The platform could accommodate deletion, not only in user
history but also in subsequent recommendations.
Ideally, this deletion is both swift and seamless.
An incomplete "under-deletion" likely persists the underlying concepts learned from the deleted records, preventing the user from cultivating a new path forward due to "echo-chamber" style feedback~\citep{chaney2018algorithmic, jiang2019degenerate, mansoury2020feedback}.
Yet, an "over-deletion" may needlessly degrade model utility; for instance, a callous reset can cause degeneracy where trendy items are conspicuously missing.

% This deletion
% must be (1) swift, as retraining is expensive, and (2) well-calibrated, so the user can move on seamlessly. Due to "echo-chamber" feedback loops~\citep{chaney2018algorithmic, jiang2019degenerate, mansoury2020feedback}, an incomplete "under-deletion" likely persists the underlying concepts learned from the deleted records, preventing the user from cultivating a new path forward. Yet, a callous reset may needlessly hurt the utility of the system. Worse, an "over-deletion" may cause certain recommendations to be conspicuously missing, introducing new privacy risks by revealing to other observers which memories were removed.

Fortunately, many deployed recommendation systems are uncomplicated: assuming low-rank user and item features, they solve a matrix completion problem. For building industrial recommenders, a widely-used optimization is Alternating Least Squares (ALS) which is fast-convergent~\citep{hu2008collaborative, koren2009matrix, takacs2011applications, he2016fast}. We focus on data erasure from these practical systems.

Despite the simplicity of bi-linear recommendation models, few works explored performing unlearning from them post-training. Thus, the first step of our exploration examines when linear models of few parameters indeed fail to memorize samples, or are otherwise "robust" against a small number of random deletions, summarized in Section~\ref{sec:inherent}.

For arbitrary deletion, we develop Unlearn-ALS, which modifies the intermediate confidence matrix used in ALS to achieve fast forgetting. Mathematically, Unlearn-ALS is equivalent to minimizing the loss of the model on the remaining data by retraining with ALS, making it a method for \emph{exact} deletion. Section~\ref{sec:method} presents its analyses and results.

We further ask, is our work done? In a similar setting, industrial recommendations and systems trained with differential privacy do leak training data with specific users ~\citep{calandrino2011you, rahman2018membership}, underscoring the importance of empirical evaluation. While our theory works well in the random deletion setting, in practice, however, there may still be privacy risks with respect to the deleted data, in the model after performing deletion. In completion, we develop a membership inference variant to evaluate the privacy risks of our unlearning procedure.

\paragraph{Our contributions} (1) We clarify that practical bi-linear recommendation models can have privacy risks from memorizing training data. (2) We propose Untrain-ALS, a crafty and fast heuristic that unlearns a bi-linear model, and makes no compromise to recommendation accuracy. (3) We devise an empirical test using de-noised membership inference, which is more sensitive to bi-linear models' memorization.
\section{Related Works}
\paragraph{Machine unlearning} is an emerging field motivated by performance and computation trade-offs to implement the Right to Be Forgetten on machine learning models~\citep{grau2006eternal}.
When a user seeks to retract data used in training, the derived model ought to
update with respect to the change. Unlearning thus trades off computation, accuracy, and privacy, and is often compared with retraining~\citep{pmlr-v132-neel21a, NEURIPS2019_cb79f8fa, bourtoule2021machine, golatkar2020eternal}.
\paragraph{Unlearning recommendation systems} is concurrently explored by
~\citet{li2022making} and~\citet{chen2022recommendation}, which target unlearning for industrial scale recommendations built through collaborative filtering. Sharding and user clustering are key to their methods, which we do not consider. Instead, our work complements the line of work through a much simpler unlearning algorithm that applies to all bi-linear models with minimal architectural change.
\paragraph{Differentially-private recommendations}
~\citet{mcsherry2009differentially, liu2015fast} may be naturally compliant towards the Right to Be Forgotten by reducing the risk related to the model output revealing information about the inclusion of certain data. However these methods would need to anticipate to a certain extent the likelihood of deletion, and build that into training.

\paragraph{Evaluations against privacy risks} if no privacy risk is shown, it would mean that no computation needs to be expended on unlearning. Membership Inference is a popular method that measures training data memorization by a model. Typical membership inference uses a collection of samples that are not in the training data, feed them to the model, and take the outputs as the baseline negative training set. The positive training set is the data that the model has seen in the training set. Other membership inference methods have been developed, usually requiring access to the model or the training procedure more metrics~\citep{chen2021machine}. The central idea is to make the empirical attack model more powerful.

Recently, ~\citet{46702} took a different approach. They developed a very effective empirical evaluation would be applicable to any model after it has been trained. For large scale language models, feature injection can test if a data point had been deleted~\citep{izzo2021approximate}. This negative dataset is manufactured "poison" to the training procedure. The intuition is that if the model is prone to memorization, it would be able to reproduce the exact random string that was injected in the training set. The membership inference variant thus focuses on engineering a better dataset, thus making it more effective at uncovering memorization. While powerful, it requires internal access to model training.

\paragraph{Differential Privacy}
Similar to a well-behaving matrix completion solution's inherent privacy (Section~\ref{sec:inherent}), some models may be less prone to memorizing individual data points. As a result, they are less at risk for membership attacks after deletion requests.

By definition, pure differentially private models are robust to deletion, as each individual data point's membership should not be inferrable~\citep{dwork2009differential}. Yet, not all models trained with differential privacy are robust. In practice, assumptions on the independences between data points do not hold and the number of deletion requests may not be known ahead of training; additionally, businesses often opt for approximations, since pure differential privacy poses degradation on model utility. As a result, ~\citet{rahman2018membership} finds that models trained to be differentially private are yet vulnerable.
\section{Preliminaries}
\label{sec:prelim}
%This section sets up the problem statement: to perform Machine Unlearning~\citep{bourtoule2021machine} from the learned recommendations when users send deletion requests.
\begin{table}[t]
\caption{Notations for Untrain-ALS comparisons.}
\label{tab:notation}
\vskip 0.15in
\begin{center}
\begin{small}
\begin{sc}
\begin{tabular}{lcccr}
\toprule
Model & Datasets & Baseline For\\
\midrule
 $\mathcal{M}_\mathrm{undeleted}$& $\mathcal{D}_\mathrm{obs}$    & Performance \\
$\mathcal{M}_\mathrm{retrain}$& $\mathcal{D}_\mathrm{remain}$    & Privacy Loss \\
 $\mathcal{M}_\mathrm{untrain}$& $\mathcal{D}_\mathrm{obs}$, $\mathcal{D}_\mathrm{removal}$& (Our Method) \\
\bottomrule
\end{tabular}
\end{sc}
\end{small}
\end{center}
\vskip -0.1in
\end{table}
\begin{figure}
\begin{minipage}[t]{0.47\textwidth}
\centering
\includegraphics[width=0.98\linewidth]{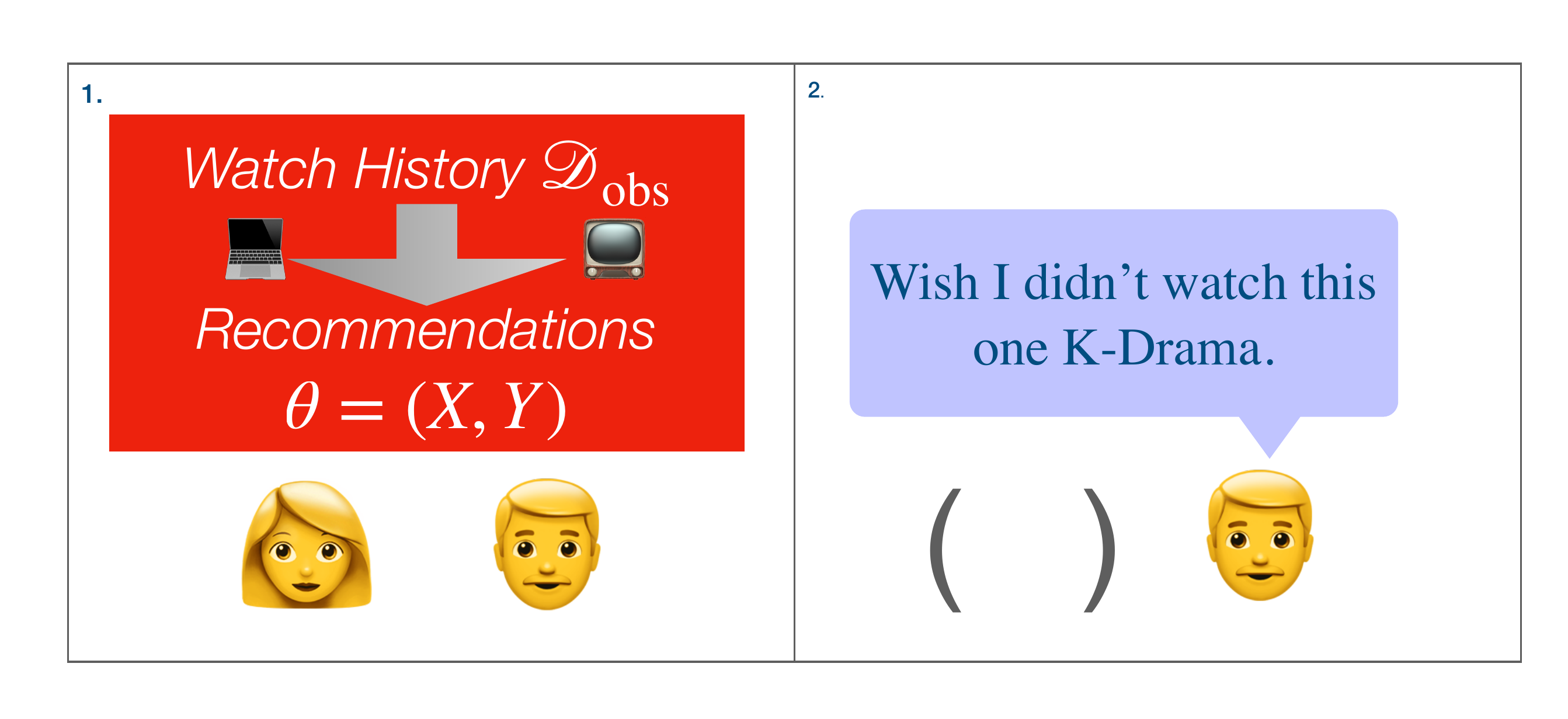}
    \caption{A user suffers a sudden breakup, and requests the recommendation owner Netflix to erase selective watch histories.}
\end{minipage}

\hfill
\begin{minipage}[t]{0.47\textwidth}

\end{minipage}
\end{figure}

\textbf{Matrix Completion}. We assume a base collaborative filtering model based on matrix factorization, learned through a user-item ratings matrix $P$ as in MovieLens~\citep{bennett2007netflix}. The downstream recommendation for each user is given based on the ranking of items~\citep{koren2009matrix}.

Assume matrix $M$ where $m_{ij}:=M[i][j]$ denotes the \emph{ground truth} preference of user $i$ with respect to item $j$. The entries of $P$ are assumed sampled from $M$; if the interaction is not observed, $p_{ij}=0$. In matrix factorization, $M$ can be recovered through a low rank multiplication,
\begin{equation}
\label{eq:matrix_completion}
M = X Y^\mathsf{T} .
\end{equation}
where $X$ depicts user features over all users, and $Y$ is the underlying item factors e.g. movie features.
Given only $P$, we aim to recover $X, Y$.

\begin{figure*}[t]
\begin{minipage}[t]{0.47\textwidth}
\begin{algorithm}[H]
\caption{AlternatingLeastSquares}
\label{alg:als}
    \begin{algorithmic}
        \REQUIRE $P$, $\alpha$, $\lambda$, initialize $X, Y$ randomly.
        \STATE $c_{ui} \gets 1 + \alpha p_{ui}$ \COMMENT{~\citep{hu2008collaborative}}
        \WHILE{model does not converge}

        \FORALL{$u$}
        %\State $x_u \gets (Y^\intercal C^u Y + \lambda I)^{-1}C^u p(u)$
        \STATE $x_u \gets (Y^\intercal C^u Y + \lambda I)^{-1}C^u P^u$
        \ENDFOR

        \FORALL{$i$}
        %\State $y_i \gets (X^\intercal C^i X + \lambda I)^{-1}C^i p(i)$
        \STATE $y_i \gets (X^\intercal C^i X + \lambda I)^{-1}C^i P^i$\;
        \ENDFOR
        \ENDWHILE
    %\STATE \RETURN $X, Y$ as $\hat{X}, \hat{Y}$
    %\State $(\hat{X}, \hat{Y}) \gets (X, Y)$
    \end{algorithmic}
\end{algorithm}

\end{minipage}
\hfill
\begin{minipage}[t]{0.47\textwidth}
\begin{algorithm}[H]
    \caption{Untrain-ALS}
    \label{alg:untrain_als}
        \begin{algorithmic}
            \REQUIRE $P, \alpha, \lambda, \hat{X}, \hat{Y}, C_0, \mathcal{D}_\mathrm{removal}$
    %        \Ensure $\alpha = 40$
            \STATE $X, Y \gets \hat{X}, \hat{Y}$ \COMMENT{from Algorithm~\ref{alg:als}}
            \FORALL{$(u,i)\in\mathcal{D}_\mathrm{removal}$}
            \STATE $p_{ui}\gets 0, c_{ui}\gets 0$ \COMMENT{delete and block}
            \ENDFOR
            \WHILE{model does not converge}

            \FORALL{$u$}
            % \State $x_u \gets (Y^\intercal C^u Y + \lambda I)^{-1}C^u p(u)$
            \STATE $x_u \gets (Y^\intercal C^u Y + \lambda I)^{-1}C^u P^u$
            \ENDFOR

            \FORALL{$i$}
            %\State $y_i \gets (X^\intercal C^i X + \lambda I)^{-1}C^i p(i)$
            \STATE $y_i \gets (X^\intercal C^i X + \lambda I)^{-1}C^i P^i$
            \ENDFOR
            \ENDWHILE
            %\State \Return $X, Y$ as $\hat{X}, \hat{Y}$
        \end{algorithmic}
\end{algorithm}
\end{minipage}
\end{figure*}
\textbf{Alternating Least Squares (ALS)}. Unless otherwise mentioned, we simulate training (and re-training) with AlternatingLeastSquares (ALS), a widely deployed heuristic by~\citet{hu2008collaborative, takacs2011applications} outlined in Algorithm~\ref{alg:als}. ALS is exceedingly simple and parallelizable; despite having little theoretic guarantee it converges fast empirically for recommendation data~\citep{koren2009matrix, jain2013low, uschmajew2012local}.

For given ratings matrix $P$ and desirable rank $k$, we learn the model parameters $\hat{\theta} = \{\hat{X}, \hat{Y}\}$. The regularized matrix completion with parameter $\lambda$ also associates each entry with a confidence score $c_{ui}$. Using $\mathcal{D}_\mathrm{obs} = \{(u, i)\}$ to denote the coordinates of $M$ that contain explicit observations\footnote{$m_{ui}\neq 0$ $\forall(u, i)\in \mathcal{D}_\mathrm{obs}$}, the loss function $L_\mathrm{ALS}(X, Y)$ is written as
\begin{equation}
\label{eq:als}
\sum_{(u, i) \in \mathcal{D}_\mathrm{obs}} c_{ui} (p_{ui} - x^\intercal_u y_i)^2 + \lambda(\sum_u||x_u||^2 + \sum_i ||y_i||^2).
\end{equation}

Algorithm~\ref{alg:als} makes a non-convex optimization convex at each of the alternating minimizations. To tackle implicit feedback, a confidence matrix $C$ is constructed as a soft copy of the ratings, where $c_{ui}:= 1 + \alpha p_{ui}$ for $\alpha \in \mathbf{R^+}$: if the ratings were high, the confidence is high, and if the ratings are missing, the confidence is low. $C$ is then used throughout the iterations instead of $P$.

Though we treat ALS as the baseline ground truth for training (and re-training), our unlearning algorithm, Untrain-ALS, applies to any bi-linear model. See Appendix~\ref{app:als} for experiment parameters.

\textbf{Additional assumptions}.
The removal set, $\mathcal{D}_\mathrm{removal}$, is uniformly sampled from $\mathcal{D}_\mathrm{removal}$ without replacement, and it cannot be known prior to training. Our theoretical analysis replies on uniform sampling. Further, the coordinates in $\mathcal{D}_\mathrm{obs}$ are assumed to be i.i.d., to ensure that models trained without access to the deleted data are statistically independent from the removal set. Lastly, $|\mathcal{D}_\mathrm{obs}|\gg|\mathcal{D}_\mathrm{removal}|$ to simulate occasional deletion requests.

\textbf{Re-training as privacy baseline}. As our goal is to neither over- nor under-delete, the ideal removal of $P[m][n]$ is to train another model with new preference matrix $P'$ where $P'[m][n] = 0; P'[i][j] = P[i][j]\, \mathrm{otherwise}$. The retrained model will thus treat the removed samples as simply missing data, as ~\citet{hu2008collaborative}'s \emph{implicit} feedback, ensuring privacy requirements. Additionally, we are only concerned with cases where $P_{mn} \neq 0$ so that the deletion is meaningful.

\textbf{Empirical evaluations.} In our setup, after unlearning procedure, the removed data should be indistinguishable from unobserved data. In Membership Inference (MI), the trained model's outputs can be exploited to judge whether a data sample was part of the training data. Typically, an MI classifier $\sigma(\mathcal{M}):(x)\to \{0, 1\}$ is a binary logistic regressor. Our MI training set is constructed with positive data of actual training samples' outputs, and negative data of removed training samples' outputs.

Nonetheless, a robust unlearning does not require an associated low MI accuracy. Instead, we are concerned with \emph{increased} confidence in membership attack caused by the unlearning procedure.

\textbf{Vulnerability}. Fixing the training procedure, the re-trained model and the trained model can be seen as a function of their observed ratings matrix. Let $\mathrm{MI}(\cdot):(\theta, \mathcal{D}_\mathrm{removal}, \mathcal{D}_\mathrm{remain}) \to [0,1]$, which refers to the membership inference accuracy on a particular model given the removal set and the remaining set. Because all the evaluations fix the datasets between retraining and untraining, we simply write $\mathrm{MI}(\mathrm{untrain})$ to refer to membership inference accuracy with untraining.

Typically, $\mathrm{MI}$ is directly used as a vulnerability measure. As we compare against re-training from scratch, the \emph{additional} vulnerability caused by the choosing untraining over retraining is written as $\mathrm{MI}(\mathrm{untrain}) - \mathrm{MI}(\mathrm{retrain})$. In Section~\ref{sec:mi}, we propose instead to use $\mathrm{MI}(\mathrm{unlearn}) - \mathrm{MI}(\mathrm{train}) -\mathrm{MI}(\mathrm{undeleted})$ under fixed data splits, to denoise the effect of the base undeleted model.

\section{Intuitions on default privacy against random data removal in matrix completion}
\label{sec:inherent}
A model is defaultly private when it does not need to change under the removal of some training data. When user-requested deletions are drawn uniformly from user data, two factors indicate "inherent" robustness of a matrix completion model: 1. having high test accuracies (or trained with calibration) and 2. the low rank nature of the data.
%the observed user-item entries $\mathcal{D}_\mathrm{obs}$

We comb through these arguments, extended n Appendix~\ref{app:inherent}, and present why empirical methods are still needed.
\begin{figure}[t]
  \centering
  \includegraphics[width=0.47\textwidth]{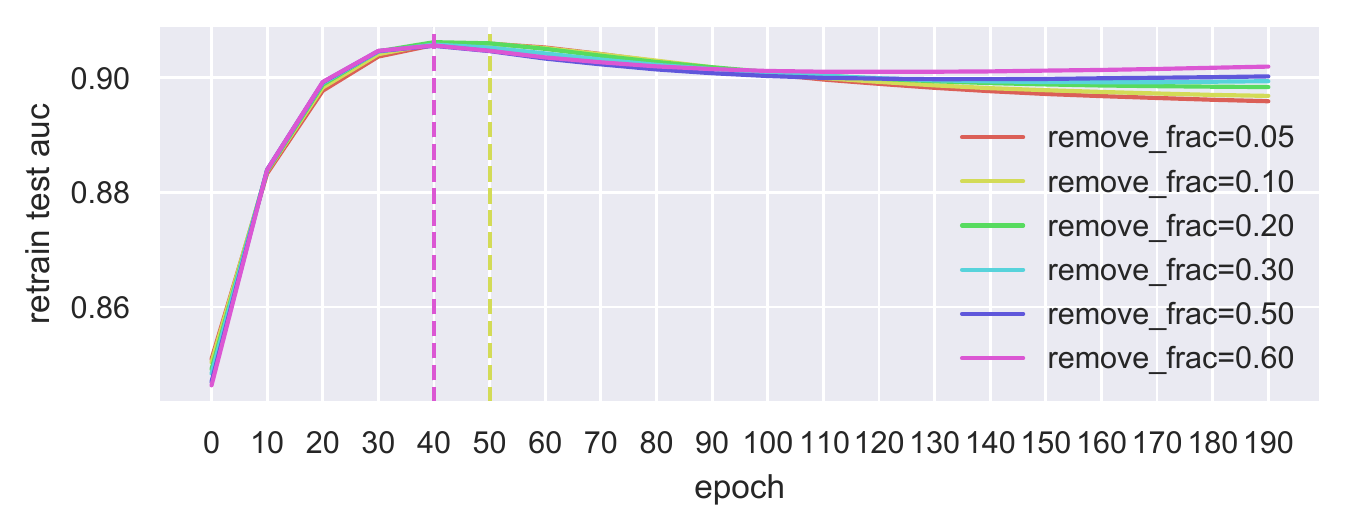}
  \caption{\textbf{Baseline: re-training dynamics.} When training from scratch, model AUC on same test set (1\% of MovieLens-1M) across 6 different fractions of removal. Dotted: the optimal number of training iterations. Results checkpointed at every 10 iterations.}
  \label{fig:auc_1m_retrain}
\end{figure}

%This section sketches the intuitions, and is extended in Appendix~\ref{app:inherent}.

\textbf{Arguments for inherent privacy}. First, implicit feedback datasets mark missing data as $0$ in $P$. The model training treats unobserved entries as zero, including $\mathcal{D}_\mathrm{test}$.
%that is held out for validation.

Second, identical sampling for user requests $\mathcal{D}_\mathrm{removal}$ and $\mathcal{D}_\mathrm{test}$. For $x_r\in \mathcal{D}_\mathrm{removal}, x_t\in \mathcal{D}_\mathrm{test}, x_r, x_t \stackrel{i.i.d.}{\sim} \mathrm{U}(\mathcal{D}_\mathrm{obs})$. Naturally $\mathcal{M}_\mathrm{undeleted}$ cannot distinguish between a sample drawn from one set or another based on query output as in ~\citet{carlini2022membership}'s "membership inference security game", or $\mathbb{E}[L_{x\sim \mathcal{D}_\mathrm{removal}}(x)] = \mathbb{E}[L_{x\sim \mathcal{D}_\mathrm{test}}(x)]$.

Moreover, empirical recommenders can be highly accurate without memorization. At the optimal AUC in Figure~\ref{fig:auc_1m_retrain}, the base model predicts well even with large removal fractions. Further, in linear models, per~\citet{kearns1995bound} and ~\citet{blum1999beating}, appropriate calibration in training results in in-domain generalization, expecting low prediction losses on missing data for both $\mathcal{M}_\mathrm{retrained}$ and $\mathcal{M}_\mathrm{undeleted}$.

Lastly, the impact of using a small number of parameters in matrix completion is extended in Appendix~\ref{app:inherent}. Rehashing the key claim in ~\citet{recht2011simpler} we show that the exact solutions to matrix completion is inherently robust to randomly sampled deletions under data coherence assumptions.

\textbf{Real-world ALS training breaks theoretic assumptions}. Model training typically employs regularization (Equation~\ref{eq:als}), and early-stopped at the best fit (Algorithm~\ref{alg:als}), not to completion. Matrix coherence of real world data, as ~\citet{recht2011simpler} requires, is not testable in practice. Lastly, the decompositions learned using ALS can be non-unique (nor equivalent up to a rotation)~\citep{jain2013low}, so the removal samples may be especially vulnerable with respect to the currently deployed model, thus requiring manual deletion.

Nevertheless, average membership attack accuracies may be especially low against a matrix completion model. An attacker aims to discriminate between the predictions influenced by $\mathcal{D}_\mathrm{removal}$ and $\mathcal{D}_\mathrm{remain}$ after seeing some samples from each~\citep{carlini2022membership}.
Varying data splits, a well-calibrated $\mathcal{M}$ has similar expected losses across. Optimizing for area-under-curve (AUC) is used in 1. thresholding membership inference model on the removal data and 2. selecting the optimal model, we have $\mathbb{P}_{(u,i)\sim \mathcal{D}_\mathrm{removal}}(p_{ui} = 1)\approx \mathbb{P}_{(u,i)\sim \mathcal{D}_\mathrm{obs}}(p_{ui} = 1) = \mathrm{AUC}$. For both $\mathcal{M}_\mathrm{retrain}$ and $\mathcal{M}_\mathrm{undeleted}$, lower validation loss hinders attacker accuracy, making the \emph{difference} of the attacks i.e. the privacy improvements from re-training numerically small, which we discuss further in Section~\ref{sec:modify-mi} and discard the averaging across data splits for more effective membership inference.

% we have $P_\mathrm{retrained}(p_{ui} = 1| (u,i)\sim \mathcal{D}_\mathrm{rm})\approx P_\mathrm{retrained}(p_{ui} = 1| (u,i)\sim \mathcal{D}_\mathrm{obs}) = \mathrm{AUC}_\mathrm{retrain}$. For each model, the approximation is directly relatable to validation loss. We discuss the implications in Section~\ref{sec:modify-mi}, where we discard the averaging across data splits for more effective membership inference.
\section{Exact Deletion with Untrain-ALS: Untraining Alternating Least Squares}
\label{sec:method}
Our unlearning strategy, Untrain-ALS outlined in Algorithm~\ref{alg:untrain_als}, makes slight modifications to the fast ALS heuristic used in training implicit feedback recommendations.
\begin{enumerate}
\item \textbf{Pre-train}. Use the resulting user and item features $X_0$, $Y_0$ in Algorithm~\ref{alg:als} to initialize ALS.

\item \textbf{Deleting preferences}. Set $p_{ui} = 0$ for deleted item-user interaction $i, u$, common practice for fine-tuning.

\item \textbf{Blocking confidence on removed data}. Set $c_{ui}\leftarrow 0$ for deleted item-user interaction $i, u$ at all subsequent iterations.  Crucially this prevents further influence of the deleted data, thus allowing the model to refit to the remaining data fast. Optionally, use adjusted inverse.
% [both: un-training, maybe only doing 2 for ablation study].
\end{enumerate}
%The hope is that the model "rewinds" the influence of the removal data. Unlike gradient methods, ALS can make large steps.

%\textbf{Unlearn-ALS Intuition}. Steps 1-2 emulate fine-tuning; unlike gradient methods, ALS can make large steps. Step 3 prevents further influence of the deleted data, thus allowing the model to refit to the remaining data fast. The hope is that the model "rewinds" the influence of the removal data.

\subsection{Untrain loss = retrain loss, functionally}
\label{sec:theory}
Recall that the holy grail of unlearning is to approximate retraining. Under these modifications to $p_{ui}$ and $c_{ui}$, Untrain-ALS objective is functionally equivalent to re-training.
\begin{theorem}
$L_\mathrm{UntrainALS} = L_\mathrm{retrain}$ on $\mathcal{D}_\mathrm{obs}, \mathcal{D}_\mathrm{removal}$.
\end{theorem}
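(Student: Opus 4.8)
The plan is to prove the identity at the level of the objective functions: I will show that $L_\mathrm{UntrainALS}$ and $L_\mathrm{retrain}$, regarded as functions of the parameters $(X,Y)$, are the \emph{same} function, from which equality of their minimizers and of the per-coordinate ALS subproblems follows immediately. First I would fix notation, writing $\mathcal{D}_\mathrm{remain} = \mathcal{D}_\mathrm{obs}\setminus\mathcal{D}_\mathrm{removal}$ and expressing both losses in the confidence-weighted form of Equation~\ref{eq:als}. For retraining, deleting an interaction sends it out of the observed set, so $L_\mathrm{retrain}$ is exactly Equation~\ref{eq:als} with the data sum taken over $\mathcal{D}_\mathrm{remain}$; crucially the confidences $c_{ui}=1+\alpha p_{ui}$ of the surviving entries are untouched, since their $p_{ui}$ are unchanged in $P'$.

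The key step is a term-by-term comparison of the data-fitting sums. I would write $L_\mathrm{UntrainALS}$ as Equation~\ref{eq:als} summed over the \emph{full} set $\mathcal{D}_\mathrm{obs}$ but with the modified preferences and confidences $(p'_{ui},c'_{ui})$ produced by Algorithm~\ref{alg:untrain_als}, namely $p'_{ui}=c'_{ui}=0$ on $\mathcal{D}_\mathrm{removal}$ and $(p'_{ui},c'_{ui})=(p_{ui},c_{ui})$ on $\mathcal{D}_\mathrm{remain}$. Splitting the sum over $\mathcal{D}_\mathrm{obs}$ into its parts over $\mathcal{D}_\mathrm{remain}$ and $\mathcal{D}_\mathrm{removal}$, every summand indexed by $\mathcal{D}_\mathrm{removal}$ equals $c'_{ui}(p'_{ui}-x_u^\intercal y_i)^2 = 0$ because the weight $c'_{ui}$ is zero, \emph{independently} of the value of $x_u^\intercal y_i$; hence the removed block contributes nothing to the loss. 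On $\mathcal{D}_\mathrm{remain}$ the summands coincide with those of $L_\mathrm{retrain}$ because both $c'$ and $p'$ agree with $c$ and $p$ there. The regularizer $\lambda(\sum_u\|x_u\|^2+\sum_i\|y_i\|^2)$ is data-independent and identical in the two objectives. Therefore $L_\mathrm{UntrainALS}(X,Y)=L_\mathrm{retrain}(X,Y)$ for all $(X,Y)$.

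To argue the stronger claim that untraining is \emph{functionally} equivalent --- same optimization landscape, not merely same optimal value --- I would then observe that a zero confidence annihilates the removed entries inside each alternating subproblem as well: in $x_u\gets(Y^\intercal C^u Y+\lambda I)^{-1}Y^\intercal C^u P^u$ the deleted items drop out of both the Gram term $Y^\intercal C^u Y=\sum_i c_{ui}\,y_i y_i^\intercal$ and the right-hand side $Y^\intercal C^u P^u=\sum_i c_{ui} p_{ui}\,y_i$, and symmetrically for the $y_i$ update. So each closed-form update computed by Untrain-ALS is identical to the update retraining would compute on $\mathcal{D}_\mathrm{remain}$, which promotes the pointwise equality of losses to equality of the entire ALS dynamics (the only difference being the warm start $\hat X,\hat Y$ versus a random initialization, which does not change the objective).

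The one genuinely delicate point --- and the place a naive argument goes wrong --- is the semantics of deletion under confidence weighting. If one instead modeled a removed interaction as a \emph{low-confidence negative} in the implicit-feedback style ($p_{ui}=0$ but $c_{ui}=1$), the removed block would leave the residual $\sum_{(u,i)\in\mathcal{D}_\mathrm{removal}}(x_u^\intercal y_i)^2$ in the objective and the equality would fail. The proof must therefore commit to the retraining baseline in which a deleted observation leaves $\mathcal{D}_\mathrm{obs}$ entirely --- equivalently, carries zero weight --- and verify that the blocking step $c_{ui}\leftarrow 0$ of Algorithm~\ref{alg:untrain_als} reproduces exactly this effect at every subsequent iteration. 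Once that correspondence is nailed down, the remainder is the routine bookkeeping sketched above.
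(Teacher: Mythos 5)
Your proof follows essentially the same route as the paper's own argument: both express the untrained objective as a sum over all of $\mathcal{D}_\mathrm{obs}$ with modified confidences (the paper encodes your $(p'_{ui},c'_{ui})$ via a Kronecker-delta factor $f_c(c_{ui})=(1-\delta_{(u,i)\in\mathcal{D}_\mathrm{rm}})c_{ui}$), split the data-fitting sum into $\mathcal{D}_\mathrm{remain}$ and $\mathcal{D}_\mathrm{removal}$, annihilate the removed block by the zero weight independently of $x_u^\intercal y_i$, and observe the surviving terms and regularizer coincide with $L_\mathrm{ALS}(\mathcal{D}_\mathrm{remain})$. Your two refinements are correct and actually sharpen the paper's treatment: keeping the regularizer literally identical avoids the main text's loose re-indexing step (justified there only by $|\mathcal{D}_\mathrm{removal}|\ll|\mathcal{D}_\mathrm{obs}|$), and your explicit commitment to the deletion semantics --- a removed entry must leave $\mathcal{D}_\mathrm{obs}$ entirely rather than become a confidence-$1$ implicit zero, else a residual $\sum_{(u,i)\in\mathcal{D}_\mathrm{removal}}(x_u^\intercal y_i)^2$ survives --- makes precise a point the paper's setup only states informally, while your observation that zero confidence also drops the deleted entries from each alternating update's Gram matrix and right-hand side extends the loss-level identity to the full ALS dynamics, consistent with (and stronger than) the paper's appendix remark that equality of objectives alone does not guarantee equality of the solutions reached.
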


\begin{proof} [Proof Sketch]
\begin{multline*}
L_\mathrm{UntrainALS}(\mathcal{D}_\mathrm{obs}, \mathcal{D}_\mathrm{rm})
= \\ \sum_{(u, i) \in \mathcal{D}_\mathrm{obs}} f_c(c_{ui}) (p_{ui} - x^\intercal_u y_i)^2
+ \lambda(\sum_u||x_u||^2 + \sum_i ||y_i||^2)
\end{multline*}

where $f_c(\cdot)$ transforms the confidence score. Using Kronecker delta $\delta$ for set membership, our algorithm has
\begin{align*}
f_c(c_{ui})& = \delta_{(u,i)\in( \mathcal{D}_\mathrm{obs} \backslash \mathcal{D}_\mathrm{rm})}c_{ui} \\
& =  (1-\delta_{(u,i)\in \mathcal{D}_\mathrm{rm}})c_{ui} = c_{ui}-\delta_{(u,i)\in \mathcal{D}_\mathrm{rm}}c_{ui}.
\end{align*}

Expanding untraining loss along $\mathcal{D}_\mathrm{remain}$ and $\mathcal{D}_\mathrm{removal}$,
\begin{equation*}
\begin{split}
L_\mathrm{UntrainALS} &= \lambda(\sum_{u\in \mathcal{D}_\mathrm{obs}}||x_u||^2 + \sum_{i \in \mathcal{D}_\mathrm{obs}}||y_i||^2)
\\ &+
\sum_{(u, i) \in \mathcal{D}_\mathrm{remain}} c_{ui} (p_{ui} - x^\intercal_u y_i)^2
\\&+
\sum_{u, i \in \mathcal{D}_\mathrm{removal}} (0) (p_{ui} - x^\intercal_u y_i)^2
\end{split}
\end{equation*}
Because random requests $|\mathcal{D}_\mathrm{removal}| \ll|\mathcal{D}_\mathrm{obs}|$, the set of contributing $u, i$ is not expected to change, therefore

\begin{equation*}
\begin{split}
\label{eq:retrain_loss}
L_\mathrm{UntrainALS} &= \lambda(\sum_{u\in \mathcal{D}_\mathrm{remain}}||x_u||^2 + \sum_{i \in \mathcal{D}_\mathrm{remain}}||y_i||^2)
\\ &+
\sum_{(u, i) \in \mathcal{D}_\mathrm{remain}} c_{ui} (p_{ui} - x^\intercal_u y_i)^2
\\& = L_\mathrm{ALS}(\mathcal{D}_\mathrm{remain}) \quad \text{(Appendix~\ref{app:proof} in full)}.
\end{split}
\end{equation*}
\end{proof}
%TODO:
% put in algorithms YES
% put in teaser chart NO (res/8mo ago notebook, res_auc/1m/10mo ago?)
% math for having the same loss (app!)
% train-untrain comparisons charts!.
% fickleness chart (teaser?)
% model fit discussion. - make it into main text
% put in math on deletion may be ok
% put in runtime chart

\begin{remark}
It may appear that with such strong results, our work is over. Yet again, two real-world issues prevent us from claiming any untrained model is the same as any retrained model:
 1. empirically, the models are trained with early stopping: the number of epochs to train is determined by minimal loss; and 2. matrix factorization solutions via ALS are not unique. For empirical privacy, some of the potential solutions may be \emph{more private} than others. Therefore, it is crucial to complement with empirical privacy measures.
\end{remark}
\subsection{Untrain Runtime $\leq$ Training Runtime, Per Pass} Clearly, every pass of Untrain-ALS has the same runtime as a pass of ALS (Algorithms~\ref{alg:als} and~\ref{alg:untrain_als}). UntrainALS benefits from convergence analyses of ALS itself~\citet{uschmajew2012local}. Because the loss of the pre-trained model is minimal, using Untrain-ALS would be much faster than doing ALS from scratch. Section~\ref{sec:exp_runtime} verifies empirically that UntrainALS takes fewer passes than re-training.

\textbf{Speedups}. Every default pass of ALS requires inverting a large matrix. Though fast implementations use conjugate gradient (CG) to approximate inverses~\citep{takacs2011applications}, we note a faster alternative for exactly computing the matrix inverse in Untrain-ALS, where the original inverse is already available.
% In realistic cases of the Right to Be Forgotten, the platform receives deletion requests that are much fewer than remaining data.
Adjusting for $c_{ui}\leftarrow 0$ is equivalent to changing a single entry in the diagonal matrix $C^u$. This subtraction of a one-entry matrix is the perturbation of concern. The resulting confidence matrix under un-training, $\widetilde{C^u}$, is very close to the original confidence matrix, where
 \begin{equation}
 \widetilde{C^u} := C^u -(\mathrm{diag}[0,\cdots,c_{ui},\cdots,0]).
 \end{equation}
Consider a special case of Woodbury's inverse~\citep{woodbury1950inverting} where only one element is subtracted, by ~\citet{sherman1950adjustment}'s subtraction case, for matrix $A$, there is $(A-uv^\intercal)^{-1} = A^{-1} + A^{-1}u(1-v^\intercal A^{-1} u)^{-1}v^\intercal A^{-1}$. Let $A:=Y^\intercal C^u Y + \lambda I$. The adjusted inverse becomes
 \begin{multline}
 (\widetilde{A})^{-1}
 = (Y^\intercal C^u Y + \lambda I)^{-1} \\
 +\frac{c_{ui}}{1 -q} y_i( Y^\intercal C^u Y + \lambda I)^{-1} y^\intercal_i(Y^\intercal C^u Y + \lambda I)^{-1}.
 \end{multline}

%\[(\widetilde{A})^{-1} = (Y^\intercal C^u Y + \lambda I)^{-1} \\+\frac{c_{ui}}{1 -q} y_i( Y^\intercal C^u Y + \lambda I)^{-1} y^\intercal_i(Y^\intercal C^u Y + \lambda I)^{-1}.\]

\textbf{Overall Per-Pass Runtime}. Without inverse adjustment, $O(|\mathcal{D}_\mathrm{obs}|k^2+n^3k)$ is both the ALS and Untrain-ALS runtimes. With inverse adjustment, every user feature is computed to complete one step of ALS: $X_u \leftarrow (Y^\intercal C^u Y + \lambda I)^{-1}C^u p(u)$. In ALS, the inverse of $\mathcal{A}$ is computed in $O(k^3)$, and using CG speeds it up to $O(k^2p)$ where $p$ is the number of CG iterations. Assuming $\mathcal{A}^{-1}$ has been computed in the pretraining step, we can see the adjustment as a perturbation on $\mathcal{A}$, which we project to its inverse. This allows for a runtime of $O(k^2)$ per user or item per iteration, or every Untrain-ALS pass $O(|\mathcal{D}_\mathrm{obs}|k^2)$ for single deletion.
% ALS and Untrain-ALS runtimes are both $O(|\mathcal{D}_\mathrm{obs}|k^2+n^3k)$. With the inverse adjustment, recall in ALS, the inverse of $A$ is computed in $O(k^3)$, and using CG speeds it up to $O(k^2p)$ where $p$ is the number of CG iterations. Assuming $A^{-1}$ has been computed in the pretraining step, the adjustment is a perturbation on $A$, which we project to its inverse. This allows for a runtime of $O(k^2)$ per user or item per iteration, making every Untrain-ALS pass $O(|\mathcal{D}_\mathrm{obs}|k^2)$.

\section{Numerical Results}
\label{sec:exp}
Extensive numerical simulations verify the conclusions in our method, and we empirically demonstrate the efficiency of Unlearn-ALS using MovieLens data~\citep{bennett2007netflix}. Appendix ~\ref{app:als} states experimental parameters.

\label{sec:exp_untrain}
\subsection{Experimental Goals and Setup}
%ADD MODEL NAMES HERE?

  \begin{figure*}[t]
      \begin{minipage}[t]{0.31\textwidth}
        \centering
        \includegraphics[width=0.98\textwidth]{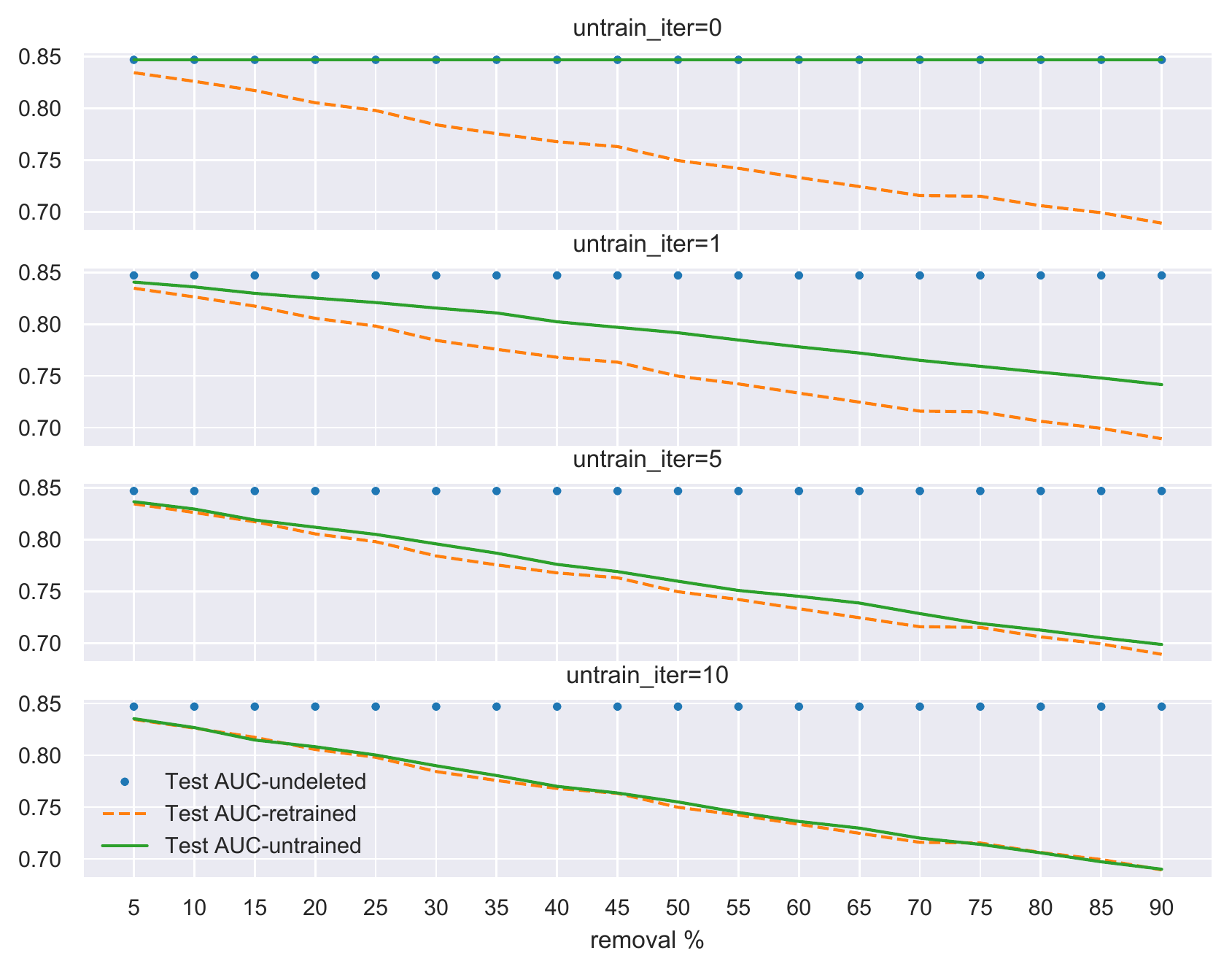}
        \caption{\textbf{Unlearning with Untrain-ALS.} Across different iterations of unlearning, and 20 different fractions of removal, each final model's area-under-curve on test set on MovieLens-100K. Retraining does 25 ALS passes.}
        \label{fig:auc_fit}
      \end{minipage}
      \hfill
      \begin{minipage}[t]{0.32\textwidth}
      \centering
      \includegraphics[width=0.98\textwidth]{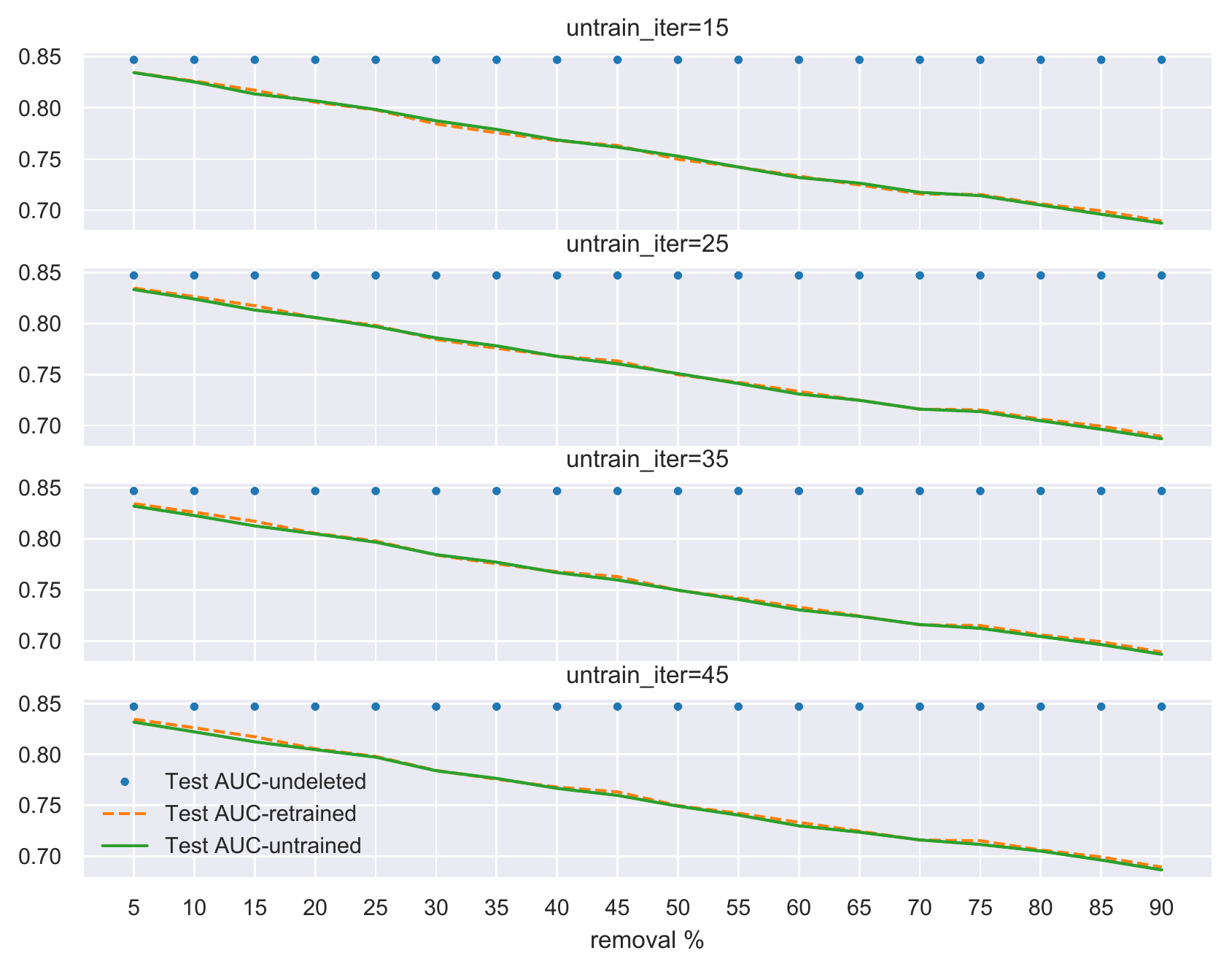}
      \caption{\textbf{Overfitting Untrain-ALS.} Across different iterations of unlearning, and 20 different fractions of removal, each final model's area-under-curve on test set on MovieLens-100K. Retraining consists of 25 ALS passes.}
      \label{fig:auc_overfit}
      \end{minipage}
      \hfill
      \begin{minipage}[t]{0.33\textwidth}
      \centering
      \includegraphics[width=0.95\textwidth]{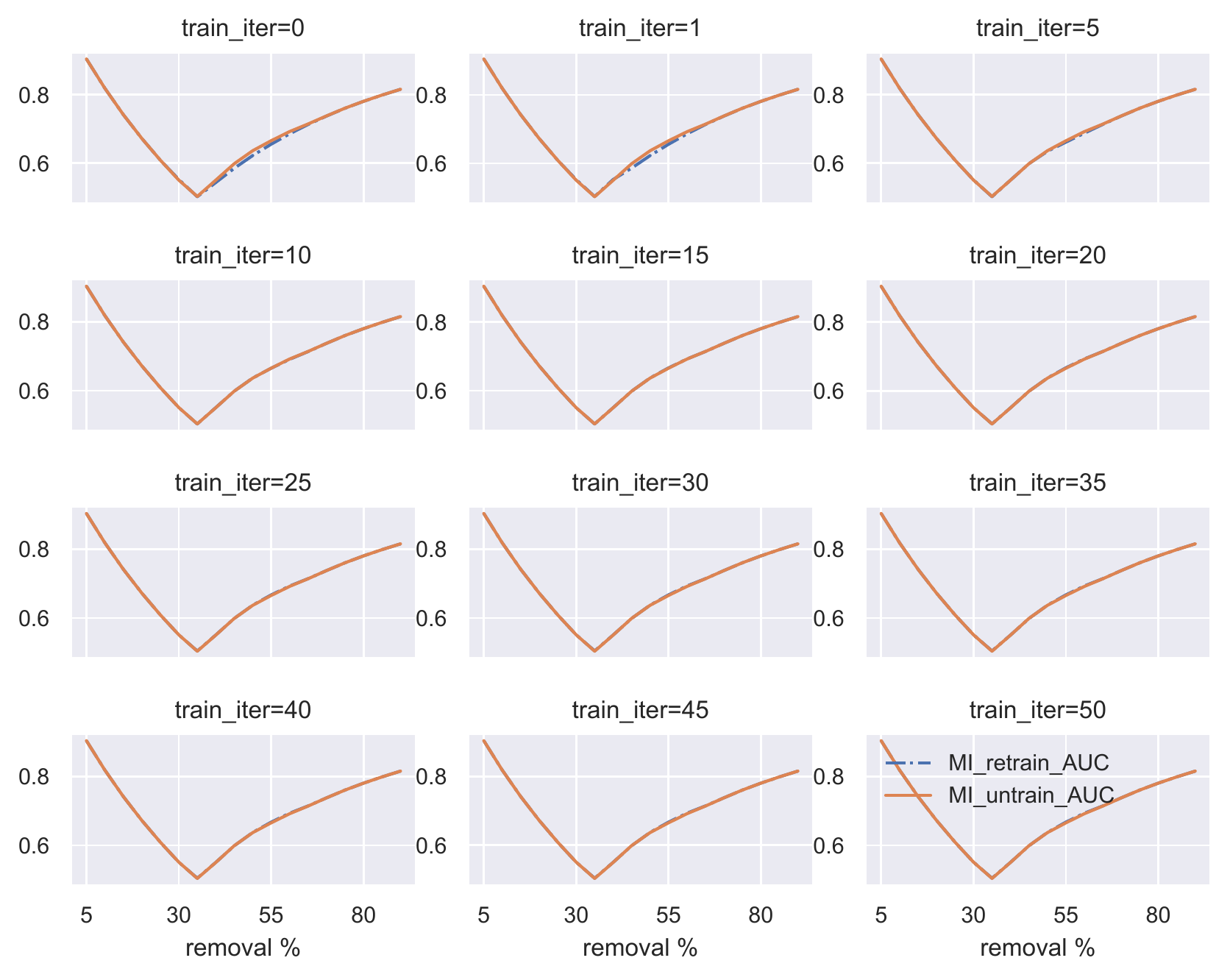}
      \caption{\textbf{Average-case membership inference accuracies.} For 10 passes of Unlearn-ALS, the base models' fit is varied through training and untraining iterations, across 20 different removal fractions in MovieLens-100K.}
      \label{fig:vanilla}
      \end{minipage}
    \end{figure*}

To investigate the practical implication of using Untrain-ALS we examine three aspects:
\begin{enumerate}
\item The accuracy of Untrain-ALS to prevent model degradation. In our case, we show that $\mathcal{M}_\mathrm{untrained}$ performs no worse than $\mathcal{M}_\mathrm{retrained}$ models that result from retraining from scratch on the remaining data.
\item The runtime in practice; in our case, it suffices to show that unlearning takes fewer iterations than retraining.
\item  The privacy implications of Untrain-ALS. In our case, unlearning should reduce privacy risks from undeleted model through reduction on MI accuracies.

Note that empirical privacy evaluations should (i.) reliably uncover vulnerabilities in $\mathcal{M}_\mathrm{undeleted}$, and (ii.) be able to differentiate between $\mathcal{M}_\mathrm{retrain}$, which does not observe the offending data, and $\mathcal{M}_\mathrm{undeleted}$.
\end{enumerate}
For data $P$, we use MovieLens datasets~\citep{bennett2007netflix, harper2015movielens}. On larger models, membership inference suffers severe sensitivity issues, therefore we illustrate on MovieLens-100k and MovieLens-1M. We run parallelized Alternating Least Squares, with conjugate gradient speedup~\citep{hu2008collaborative,takacs2011applications} as baseline; without our inverse adjustments, runtime is apparent through comparing the iteration number between models. Additional setups parameters are outlined in Appendix ~\ref{app:als}.
%The Python implementation is provided at \url{https://anonymous.4open.science/r/aistats_2023-7E20}.

%The removal set, $\mathcal{D}_\mathrm{removal}$, is held out as a fraction of total observations.
%All datasets have a 1:99 test-train split before data removal, so the heldout set is sampled first. Removal fractions ($\%$) are denominated in explicitly observed entries. For training and evaluating recommendation models themselves, we use area-under-curve (AUC), which is more appropriate than downstream recommendations across configurations. For membership attacks, we use vulnerability measures derived from membership inference accuracies~\citep{shokri2017membership}. We show training and untraining across iterations, and evaluate random removal fractions at every 5\% increments.

%To measure the efficacy of Untrain-ALS, we note two baselines: 1. The base model $\mathcal{M}_\mathrm{undeleted}$, which is trained to completion and used to initialize unlearning. It marks ideal accuracy. 2. Re-training from scratch without the removal data, $\mathcal{M}_\mathrm{retrained}$, as the baseline for vulnerability.

\subsection{Untrain-ALS: No Degradation, Fast Convergence}
\label{sec:exp_runtime}
% \begin{figure*}[t]
%   \begin{minipage}[t]{0.47\textwidth}
%     \centering
%     \includegraphics[width=0.98\textwidth]{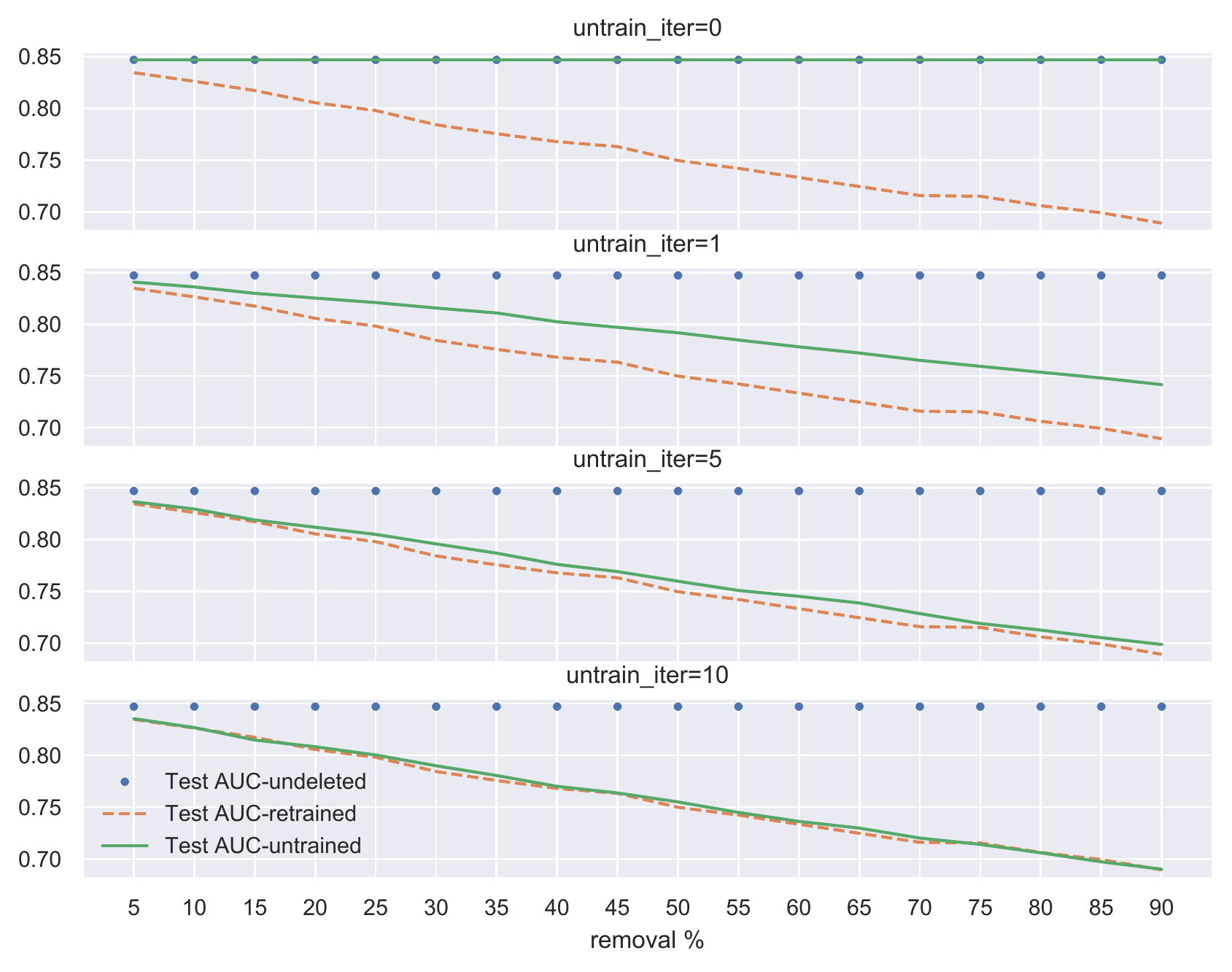}
%     \caption{\textbf{Unlearning with Untrain-ALS.} Across different iterations of unlearning, and 20 different fractions of removal, each final model's area-under-curve on test set on MovieLens-100K. Retraining is compared to at 25 passes of ALS.}
%     \label{fig:auc_fit}
%   \end{minipage}
%   \hspace{0.5cm}
%   \begin{minipage}[t]{0.47\textwidth}
%   \centering
%   \includegraphics[width=0.88\textwidth]{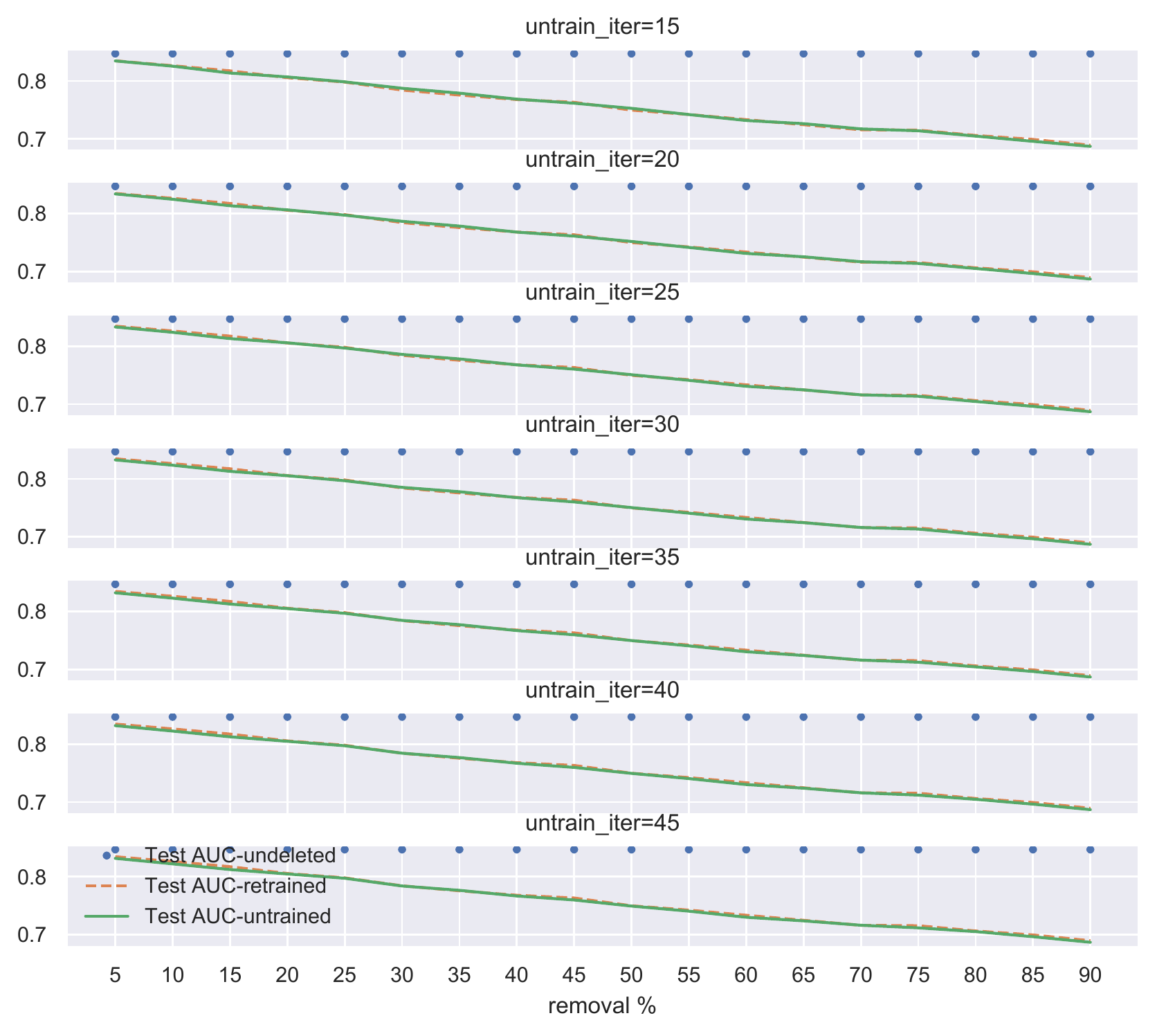}
%   \caption{\textbf{Overfitting Untrain-ALS.} Across different iterations of unlearning, and 20 different fractions of removal, each final model's area-under-curve on test set on MovieLens-100K. Retraining is compared to at 25 passes of ALS.}
%   \label{fig:auc_overfit}
%   \end{minipage}
%   \end{figure*}

    Over a wide range of removal fractions, Figure~\ref{fig:auc_fit} shows fast untraining dynamics, which result in highly performant models. In MovieLens-1M, for removal fractions under 60\%, Untrain-ALS typically converges by $10$ iterations, while retraining takes $40$ to $70$ ALS passes. Because Unlearn-ALS clearly follows the well-tested ALS, if untraining is left unchecked as in Figure~\ref{fig:auc_overfit}, there is no degradation to the model compared to training from scratch.
    Per theoretic analysis in Section~\ref{sec:theory}, Untrain-ALS is consistent with re-training without removal data in objective.
    In summary, Unlearn-ALS breaks the usual expectation that fast unlearning necessarily degrades model performance.

\subsection{Membership Inference (MI) from data queries}
\label{sec:vanilla_mi}

% \begin{figure}[h]
% %\begin{wrapfigure}%[t]
% %\begin{wrapfigure}{r}{0.51\textwidth}
%   \centering
%   \includegraphics[width=0.50\textwidth]{figure/paper_fig_100k_vanilla_un10_square}
%   \caption{\textbf{Raw membership inference accuracies for 10 passes of Unlearn-ALS} The base models' model fit is varied through training and untraining iterations, across 20 different removal fractions in MovieLens-100K.}
%   \label{fig:vanilla}
% %\end{wrapfigure}
% \end{figure}

%Overview
Recall $\mathrm{MI}(\mathcal{M}):(\theta_\mathcal{M}, \mathcal{D}_\mathrm{removal}, \mathcal{D}_\mathrm{remain}) \to [0,1]$. When evaluating data leakage across datasets and model configurations, the measure $\mathrm{MI}(\mathcal{M})$ is \emph{average-case} (Section~\ref{sec:prelim}).
We first motivate MI-based evaluations with a null result, discuss the specific risks of MI for evaluating memorization vulnerabilities, and offer an improved metric. Concurrently, ~\citep{carlini2022membership} states that a more robust metric should be used, though they are motivated by deep learning models.

% MORE BACKGROUND

% DEFINE WHICH VULNERABILITY IS USED.
\paragraph{Average-case MI is insufficient.} In Figure~\ref{fig:vanilla}, after random data is removed, the vulnerability of retraining is comparable to $10$ passes of our proposed unlearning method. While this looks like we are doing perfect, observe even with changes in training iterations (from underfit to convergent), there is no change in vulnerability.  An appealing conclusion is that underfit bi-linear models are always robust to deletion, as though no work needs to be done in unlearning so long as we stop early. \emph{We do not believe that the unlearning algorithm is absolutely private}.

% MORE BACKGROUND ON WHO DOES THIS MI< WHY AM I USING IT
\subsubsection{Sensitivity issue of MI in practice}
\label{sec:mi}
We investigate empirical attacks based on average-case membership inference against the unlearned model. As alluded to in Section~\ref{sec:inherent}, matrix completion-based models have only modest privacy risk to begin with; in implicit feedback datasets, extensive model validation can mitigate the risks against random data deletion. Meanwhile, membership inference attacks~\citep{shokri2017membership} are especially powerful when the input data has a lot of information; in matrix completion, without some advanced user fingerprinting, the model output itself is all the information. Concretely, 3 challenges arise in this pursuit for practical evaluation:
\begin{enumerate}
\item In the real world, the intial pre-trained recommenders tend to perform well, even after large portion of the data is zero-ed (Figure~\ref{fig:auc_1m_retrain}). Uniformly removing data is akin to sampling another held-out set, the initial model likely predicts the missing items just as well (Section~\ref{sec:inherent}). Moreover, corrupting the base model for better privacy is not an option in industrial deployments.

%
% \section{Base Model}
% %  %[width=0.98\textwidth]

% \begin{figure}[t]
%   \centering
%   \includegraphics[width=0.48\textwidth]{figure/retrain_test_auc_1m}
%   \caption{\textbf{Re-training dynamics.} Across different iterations of re-training from scratch, and 10 different fractions of removal, each final model's area-under-curve on test set on MovieLens-1M.}
%   \label{fig:auc_1m_retrain}
% \end{figure}

% Figure~\ref{fig:auc_1m_retrain} shows that even with large removal fractions the base model can still perform well. The dotted vertical lines mark the number of iterations that achieve the best fit for each model. As shown, the less the remaining data, generally the earlier the convergence. 10 passes are sufficient only if removal fraction is large ($>70\%$). For small fractions of removal, the best fit tends to be between $[40,70]$ passes on MovieLens-1M. In comparison, Untrain-ALS only takes $[10,45]$ iterations.

%To make matters worse, ALS performs well even after a large portion of data is deleted.
% as illustrated in Appendix~\ref{fig:auc_1m_retrain}.
%Figure~\ref{fig:auc_1m_retrain} shows that even with large removal fractions the base model can still perform well. As shown, the less the remaining data, generally the earlier the convergence. 10 passes are sufficient only if removal fraction is large ($>70\%$).

\item Using only the predicted value, $\mathcal{M}_\mathrm{untrain}$ does not distinguish between the removed and remaining samples, so there is no significant MI accuracy change. (Only at certain ratios for certain epochs can we induce a 2\% difference.) \emph{MI is highly susceptible to small noise.}
\item Varying train-test splits, the base model $\mathcal{M}_\mathrm{undeleted}$ has different membership attack vulnerabilities built-in, due to ALS not having a fixed unique solution; different training trajectories find different decompositions as solutions to the same matrix completion problem. Some of those models are inherently more defensible than others. This adds noise to the already small numerical measurement. \emph{We tackle this directly.}
\end{enumerate}

Recall that our privacy model views re-training as ground truth. To study the vulnerability of unlearning is to study the \emph{additional} vulnerability compared with re-training.

Let $\mathcal{IV}$ denote the \emph{intrinsic vulnerability} associated with the learning strategy. We are concerned with whether Untrain-ALS presents more or less intrinsic risk compared with retraining. Assuming that the training and un-training procedures have similar intrinsic vulnerability, $\mathcal{IV}_\mathrm{ALS} \approx \mathcal{IV}_\mathrm{re-train}$. An estimator for $\mathcal{IV}_\mathrm{Untrain-ALS}$ is thus the difference between the empirical measure for membership inference:
\begin{equation}
  \label{eq_naive_mi}
  %\begin{split}
  \mathcal{IV}_\mathrm{Untrain-ALS} = \mathrm{MI}(\mathrm{untrain}) - \mathrm{MI}(\mathrm{retrain})
  %\end{split}
\end{equation}
\begin{remark}
Because retraining is assumed to be statistically independent from removed data, being able to infer properties of the removed data from the re-trained model e.g. due to data duplication is not an essential vulnerability. If an empirical measurement shows that untrained model has membership vulnerability, it is a tolerable amount of privacy risk under our setup.
\end{remark}

However, this measurement on intrinsic
untraining vulnerability shows that, at the best fit, untraining and untraining are extremely
close. This numerical difference is so small, that the measurement appears dominated by noise,
while having inconclusive results, as shown in Figure~\ref{fig:vanilla}. When averaged across runs, the overlap of untraining and retraining are further obscured.

\subsubsection{Modifying Average-Case Membership Inference}
\label{sec:modify-mi}
Identifying model noise as a cause, let $\mathcal{IV'}$ be our modified intrinsic vulnerability measure, applied not only to the same $\{M,\mathcal{D}_\mathrm{obs},\mathcal{D}_\mathrm{removal}\}$, but also under identical train-test split. The splits greatly impact the model, as we see in Section~\ref{sec:mi} that intrinsic vulnerability to deletion is closely related to model AUC.
Using ALS and Untrain-ALS to retrain and unlearn after data removal, we make three accuracy measurements: $\mathrm{MI}(\mathrm{untrain})$, $\mathrm{MI}(\mathrm{retrain})$, and $\mathrm{MI}(\mathrm{undeleted})$. Even though our privacy model does not directly concern the base model, the inclusion of $\mathrm{MI}(\mathrm{undeleted})$ serves to \emph{denoise} the influence of model splits on our numerical accuracy differences. We have

% \begin{equation}
%   \begin{split}
%   \label{eq_refined_mi}
% \mathcal{IV'}_\mathrm{Untrain-ALS}
% \\= \mathrm{MI}(\mathrm{untrain}) - \mathrm{MI}(\mathrm{retrain})
% \\ -\mathrm{MI}(\mathrm{undeleted})
%   \end{split}
% \end{equation}

% \begin{equation}
%   \begin{split}
%   \label{eq_refined_mi}
% \mathcal{IV'}_\mathrm{Untrain}
% = \mathrm{MI}(\mathrm{untrain}) - \mathrm{MI}(\mathrm{retrain})
%  -\mathrm{MI}(\mathrm{undeleted})
%   \end{split}
% \end{equation}

\begin{equation}
  \begin{split}
  \label{eq_refined_mi}
\mathcal{IV'}_\mathrm{Untrain} &
=\quad \mathrm{MI}(\mathcal{M}_\mathrm{untrain})
\\ &\quad- \, \mathrm{MI}(\mathcal{M}_\mathrm{retrain})
 -\mathrm{MI}(\mathcal{M}_\mathrm{undeleted}).
  \end{split}
\end{equation}

For the same model, Equation~\ref{eq_refined_mi} appears off by a constant from Equation~\ref{eq_naive_mi}. However, as a measurement, the subtraction for each run improves numerical stability, and reduces noise when averaged over multiple runs.
\begin{figure*}
\begin{minipage}[t]{0.32\textwidth}
  \centering
  \includegraphics[width=0.85\textwidth]{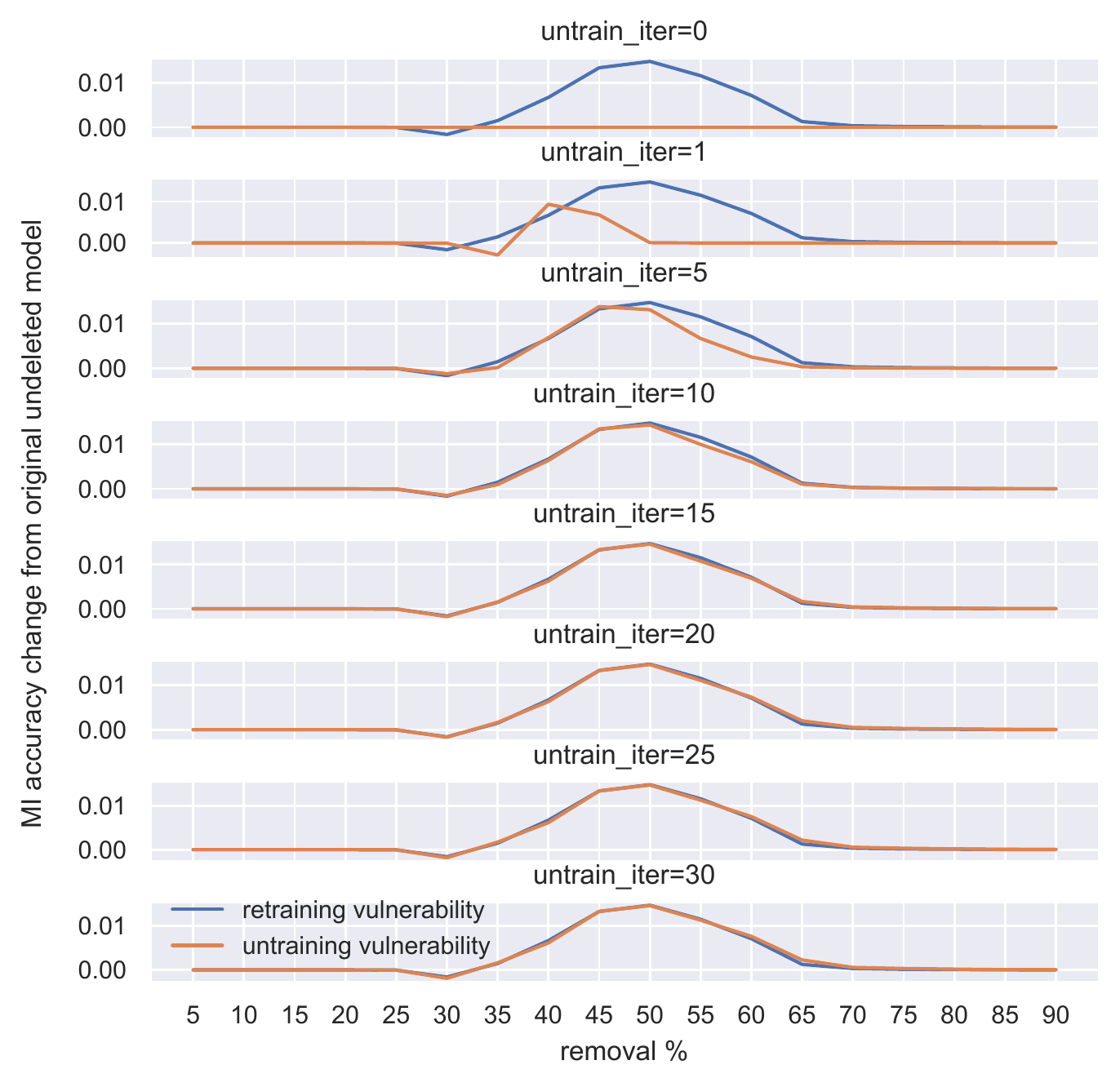}
  % \caption{\textbf{Modified Vulnerability Under Untrain-ALS.} $\mathcal{IV'}$ due to data removal for different untraining iterations and removal fractions, compared against 25 passes of re-training.}
  \caption{\textbf{Untrain-ALS $\mathcal{IV'}$} due to data removal across removal fractions, against $25$ ALS retrain passes.}
  \label{fig:mi_retrain}
\end{minipage}
\hspace{0.05cm}
\begin{minipage}[t]{0.32\textwidth}
 \centering
  \includegraphics[width=0.85\textwidth]{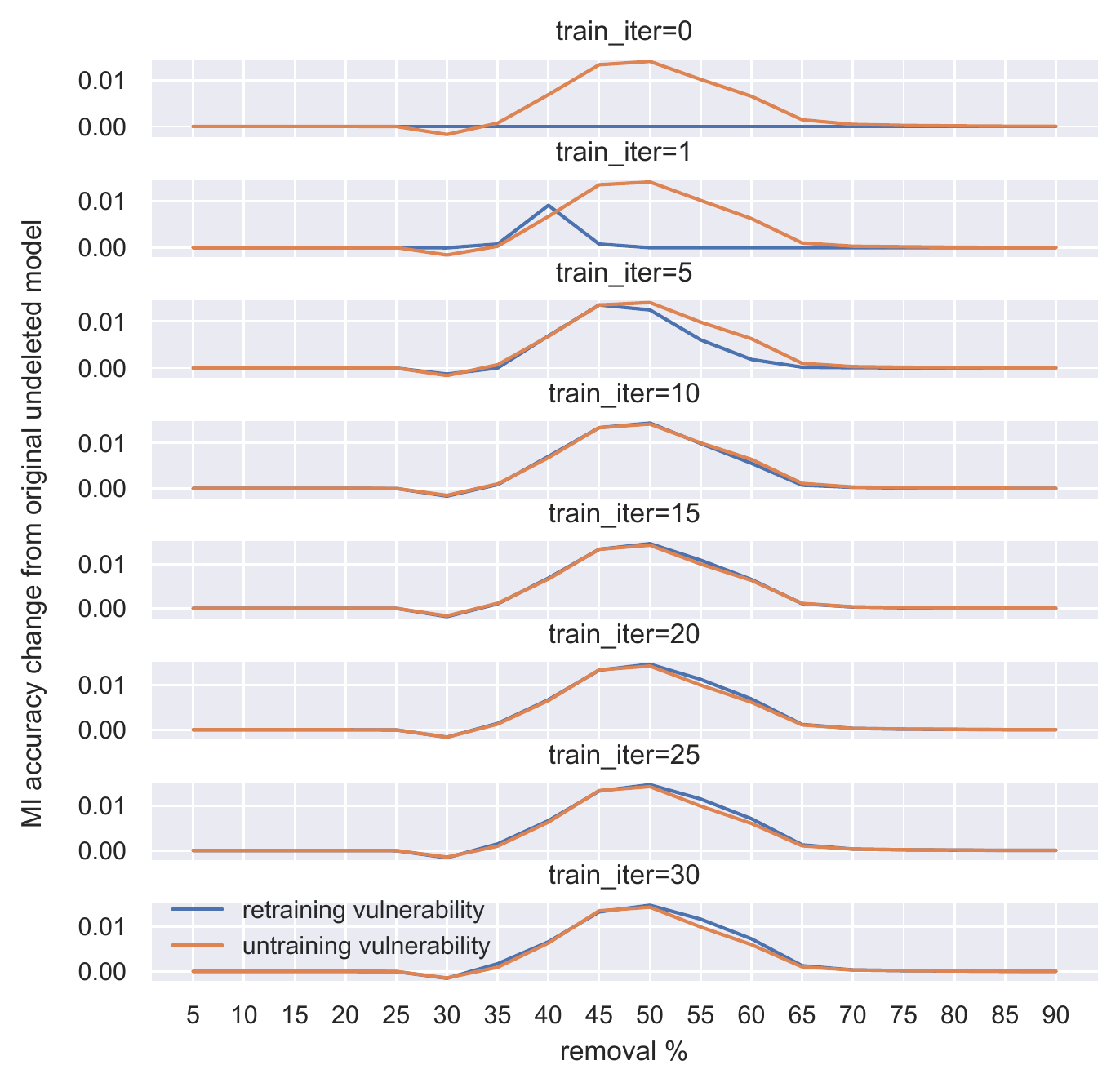}
  \caption{\textbf{Re-training $\mathcal{IV'}$} due to data removal across removal fractions, against $10$ Untrain-ALS passes.}
  \label{fig:mi_untrain}
\end{minipage}
\hspace{0.05cm}
\begin{minipage}[t]{0.32\textwidth}
\centering
\includegraphics[width=0.95\textwidth]{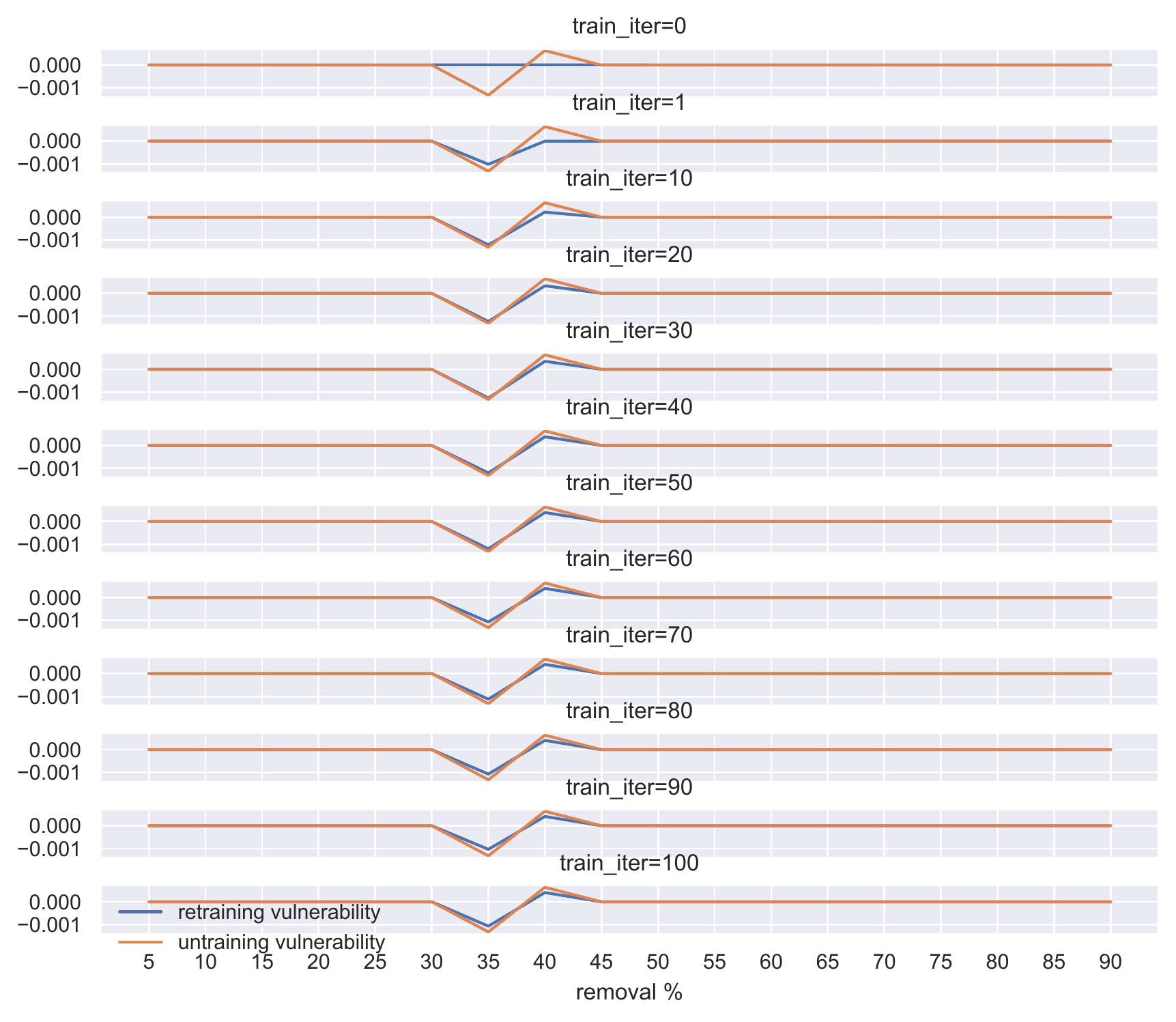}
\caption{\textbf{Large-scale Re-training $\mathcal{IV'}$} on MovieLens-1M. Re-training is compared with $45$ Untrain-ALS passes.}
\label{fig:mi_big}
\end{minipage}
\end{figure*}

In Figure~\ref{fig:mi_retrain} and ~\ref{fig:mi_untrain}, the vulnerability $\mathcal{IV}$ is measured as membership inference accuracy subtracting membership inference accuracy associated with $\mathcal{M}_\mathrm{undeleted}$, for the same split under MovieLens-100K. The removal fraction is set at every 5\% of the data (even though we are empirically only concerned with small fractions). The procedures for untraining involves training the base model with the selected number of iterations.
% The sensitivity of the refined metric is much improved,
\paragraph{Improvements.} $\mathcal{IV'}$ significantly mitigates sensitivity issues, as we now see a change with respect to model training iterations. In Figure~\ref{fig:mi_retrain}, as training iterations get larger, the inherent vulnerability is greater. In Figure~\ref{fig:mi_untrain}, as untraining continues, there is a decrease of vulnerability. Both phenomena are not salient when measured under $\mathcal{IV}$.
\paragraph{Limitations.}
Nonetheless, our efforts to denoise only has a clear effect on small scale on a specific removal range. The range related to user-requested deletion is, however, still not very sensitive. As illustrated in Figure~\ref{fig:mi_big}, at a larger scale, this metric suffers a sensitivity loss.

\section{Discussion, Limitations, and Impacts}
We propose using Untrain-ALS to perform machine unlearning on bi-linear recommendations, which is simutaneously widely-deployed in the real world and under-studied in machine unlearning. The method is fast to converge, and can unlearn exactly without compromising model degradation. However, empirically, models learned with regularized matrix completion are not unique, thus unlearning and re-training may exhibit small differences in privacy. To find them, we employ empirical attacks of memership inference, and adapt the vanilla version to denoise the impact of data splits, and successfully see trends in vulnerability that was previously obscured.
We see two trends emerging from empirical results: 1. Unlearn-ALS is clearly fast and powerful, with no degradation in model performance, unlike most unlearning methods~\citep{sekhari2021remember}. 2. Unlearn-ALS is not the same as re-training, but it closely relates to re-training in most privacy measures, provided that it is trained to the best fit.
2. Relying on membership inference classifications alone to measure unlearning thus leads to potential outstanding privacy risks. We join prior calls in urging the unlearning community to re-think empirical evaluations for unlearning to meet practical privacy needs\citep{truex2019demystifying,chen2021machine, jayaraman2020revisiting}.

\paragraph{Limitations}
\label{sec:limits}
Our work is limited to the choice of models. Though we apply to all bi-linear models, not all recommendation systems are implemented with dot products.

\paragraph{Societal impact}
\label{sec:impact}We place pressure on platforms that train on user data to give users real-time options to remove the influence of their training data. Despite research progress, however, real world systems have yet caught on~\citep{villaronga2018humans}. When users opt to remove their past records' influence on recommendations, existing implementations tend to fall under two categories: complete expunging of their recommendation, in which a user's all historic interactions are zero-ed, such as Netflix's reset, or a vague removal of learnt concepts such as Facebook's Ad preferences. While many services offer granular control over which ones of their historic actions the platform collects, they do not promise that the deletion necessarily impact downstream systems that learn from such data.

Ostensibly, two factors prevent machine unlearning to be deployed: 1. lacking legal recognition for the associated privacy risks, as GDPR-style deletion hinges on whether automated systems leak private data for the general public~\citep{villaronga2018humans}. For that, our work adds to the rigor of discovery: empirical evaluation needs revisiting. 2. industrial-scale computation expenditure on pre-trained machine learning models is massive, and there has yet been a compelling demonstration that industrial-scale recommendation models can be efficiently unlearned without hurting the bottom line. Our work on Untrain-ALS proposes efficient and exact unlearning. Upon sequential deletion requests, the unlearned model will not perform worse than the retrained model. When no trade-off is made, the hope is that both policy and industry can agree to facilitate user privacy.

\paragraph{Malicious use}
Our selective forgetting techniques can be applied to sinister areas where forgetting is a form of censorship. The right to be forgotten also faces similar criticism outside of the legal realms; after all, even the forgetting procedure in "Eternal Sunshine of the Spotless Mind" may be troublesome because it lets the recipient "live a lie"~\citep{grau2006eternal}.

\section{Conclusion}
In practice, matrix completion-based models are not guaranteed to be inherently private. To unlearn, we develop Untrain-ALS, with sound theory and efficient implementations. Untrain-ALS objective aligns exactly with re-training, so developers will see no degradation in model performance caused by choosing unlearning over re-training.

To make our solution practical, we provide a numerical speedup to scaling large systems, and a denoised vulnerability measure to improve membership inference sensitivities.
%Empirically, we show that Untrain-ALS simulates retraining in terms of accuracy measures; however, the typical evaluation metrics, including vanilla membership inference classifications, are poorly suited for evaluating privacy from removal data.
%Lastly, identifying model noise as a cause, we re-define the vulnerability metric to denoise the effect of data splits which can show minute privacy risks. The refined metric is yet only sensitive to a range of deletion, underscoring our call that binary membership inference accuracies are fickle for evaluating unlearning.
%\input{risks}

\bibliography{references}

\begin{thebibliography}{55}
\providecommand{\natexlab}[1]{#1}
\providecommand{\url}[1]{\texttt{#1}}
\expandafter\ifx\csname urlstyle\endcsname\relax
  \providecommand{\doi}[1]{doi: #1}\else
  \providecommand{\doi}{doi: \begingroup \urlstyle{rm}\Url}\fi

\bibitem[Bennett et~al.(2007)Bennett, Lanning, et~al.]{bennett2007netflix}
James Bennett, Stan Lanning, et~al.
\newblock The netflix prize.
\newblock In \emph{Proceedings of KDD cup and workshop}, volume 2007, page~35.
  Citeseer, 2007.

\bibitem[Blum et~al.(1999)Blum, Kalai, and Langford]{blum1999beating}
Avrim Blum, Adam Kalai, and John Langford.
\newblock Beating the hold-out: Bounds for k-fold and progressive
  cross-validation.
\newblock In \emph{Proceedings of the twelfth annual conference on
  Computational learning theory}, pages 203--208, 1999.

\bibitem[Bourtoule et~al.(2021)Bourtoule, Chandrasekaran, Choquette-Choo, Jia,
  Travers, Zhang, Lie, and Papernot]{bourtoule2021machine}
Lucas Bourtoule, Varun Chandrasekaran, Christopher~A Choquette-Choo, Hengrui
  Jia, Adelin Travers, Baiwu Zhang, David Lie, and Nicolas Papernot.
\newblock Machine unlearning.
\newblock In \emph{2021 IEEE Symposium on Security and Privacy (SP)}, pages
  141--159. IEEE, 2021.

\bibitem[Burt(2018)]{burt_2018}
Andrew Burt.
\newblock How will the gdpr impact machine learning?, May 2018.
\newblock URL
  \url{https://www.oreilly.com/radar/how-will-the-gdpr-impact-machine-learning/}.

\bibitem[Calandrino et~al.(2011)Calandrino, Kilzer, Narayanan, Felten, and
  Shmatikov]{calandrino2011you}
Joseph~A Calandrino, Ann Kilzer, Arvind Narayanan, Edward~W Felten, and Vitaly
  Shmatikov.
\newblock " you might also like:" privacy risks of collaborative filtering.
\newblock In \emph{2011 IEEE symposium on security and privacy}, pages
  231--246. IEEE, 2011.

\bibitem[Carlini et~al.(2018)Carlini, Liu, Kos, Úlfar Erlingsson, and
  Song]{46702}
Nicholas Carlini, Chang Liu, Jernej Kos, Úlfar Erlingsson, and Dawn Song.
\newblock The secret sharer: Evaluating and testing unintended memorization in
  neural networks.
\newblock \emph{{ArXiv e-prints}}, 1802.08232, 2018.
\newblock URL \url{https://arxiv.org/abs/1802.08232}.

\bibitem[Carlini et~al.(2022)Carlini, Chien, Nasr, Song, Terzis, and
  Tramer]{carlini2022membership}
Nicholas Carlini, Steve Chien, Milad Nasr, Shuang Song, Andreas Terzis, and
  Florian Tramer.
\newblock Membership inference attacks from first principles.
\newblock In \emph{2022 IEEE Symposium on Security and Privacy (SP)}, pages
  1897--1914. IEEE, 2022.

\bibitem[Chaney et~al.(2018)Chaney, Stewart, and
  Engelhardt]{chaney2018algorithmic}
Allison~JB Chaney, Brandon~M Stewart, and Barbara~E Engelhardt.
\newblock How algorithmic confounding in recommendation systems increases
  homogeneity and decreases utility.
\newblock In \emph{Proceedings of the 12th ACM Conference on Recommender
  Systems}, pages 224--232, 2018.

\bibitem[Chen et~al.(2022)Chen, Sun, Zhang, and Ding]{chen2022recommendation}
Chong Chen, Fei Sun, Min Zhang, and Bolin Ding.
\newblock Recommendation unlearning.
\newblock \emph{arXiv preprint arXiv:2201.06820}, 2022.

\bibitem[Chen et~al.(2021)Chen, Zhang, Wang, Backes, Humbert, and
  Zhang]{chen2021machine}
Min Chen, Zhikun Zhang, Tianhao Wang, Michael Backes, Mathias Humbert, and Yang
  Zhang.
\newblock When machine unlearning jeopardizes privacy.
\newblock In \emph{Proceedings of the 2021 ACM SIGSAC Conference on Computer
  and Communications Security}, pages 896--911, 2021.

\bibitem[Choquette-Choo et~al.(2021)Choquette-Choo, Tramer, Carlini, and
  Papernot]{choquette2021label}
Christopher~A Choquette-Choo, Florian Tramer, Nicholas Carlini, and Nicolas
  Papernot.
\newblock Label-only membership inference attacks.
\newblock In \emph{International Conference on Machine Learning}, pages
  1964--1974. PMLR, 2021.

\bibitem[{Council of European Union}(2018)]{EUdataregulations2018}
{Council of European Union}.
\newblock 2018 reform of eu data protection rules, 2018.
\newblock URL
  \url{https://ec.europa.eu/commission/sites/beta-political/files/data-protection-factsheet-changes_en.pdf}.

\bibitem[Diffie and Hellman(1979)]{diffie1979privacy}
Whitfield Diffie and Martin~E Hellman.
\newblock Privacy and authentication: An introduction to cryptography.
\newblock \emph{Proceedings of the IEEE}, 67\penalty0 (3):\penalty0 397--427,
  1979.

\bibitem[Drineas et~al.(2012)Drineas, Magdon-Ismail, Mahoney, and
  Woodruff]{drineas2012fast}
Petros Drineas, Malik Magdon-Ismail, Michael~W Mahoney, and David~P Woodruff.
\newblock Fast approximation of matrix coherence and statistical leverage.
\newblock \emph{The Journal of Machine Learning Research}, 13\penalty0
  (1):\penalty0 3475--3506, 2012.

\bibitem[Dwork and Lei(2009)]{dwork2009differential}
Cynthia Dwork and Jing Lei.
\newblock Differential privacy and robust statistics.
\newblock In \emph{Proceedings of the forty-first annual ACM symposium on
  Theory of computing}, pages 371--380, 2009.

\bibitem[Ginart et~al.(2019)Ginart, Guan, Valiant, and
  Zou]{NEURIPS2019_cb79f8fa}
Antonio Ginart, Melody Guan, Gregory Valiant, and James~Y Zou.
\newblock Making ai forget you: Data deletion in machine learning.
\newblock In H.~Wallach, H.~Larochelle, A.~Beygelzimer, F.~d\textquotesingle
  Alch\'{e}-Buc, E.~Fox, and R.~Garnett, editors, \emph{Advances in Neural
  Information Processing Systems}, volume~32. Curran Associates, Inc., 2019.
\newblock URL
  \url{https://proceedings.neurips.cc/paper/2019/file/cb79f8fa58b91d3af6c9c991f63962d3-Paper.pdf}.

\bibitem[Golatkar et~al.(2020)Golatkar, Achille, and
  Soatto]{golatkar2020eternal}
Aditya Golatkar, Alessandro Achille, and Stefano Soatto.
\newblock Eternal sunshine of the spotless net: Selective forgetting in deep
  networks.
\newblock In \emph{Proceedings of the IEEE/CVF Conference on Computer Vision
  and Pattern Recognition}, pages 9304--9312, 2020.

\bibitem[Grau(2006)]{grau2006eternal}
Christopher Grau.
\newblock " eternal sunshine of the spotless mind" and the morality of memory.
\newblock \emph{The Journal of Aesthetics and Art Criticism}, 64\penalty0
  (1):\penalty0 119--133, 2006.

\bibitem[Harper and Konstan(2015)]{harper2015movielens}
F~Maxwell Harper and Joseph~A Konstan.
\newblock The movielens datasets: History and context.
\newblock \emph{Acm transactions on interactive intelligent systems (tiis)},
  5\penalty0 (4):\penalty0 1--19, 2015.

\bibitem[He et~al.(2016)He, Zhang, Kan, and Chua]{he2016fast}
Xiangnan He, Hanwang Zhang, Min-Yen Kan, and Tat-Seng Chua.
\newblock Fast matrix factorization for online recommendation with implicit
  feedback.
\newblock In \emph{Proceedings of the 39th International ACM SIGIR conference
  on Research and Development in Information Retrieval}, pages 549--558, 2016.

\bibitem[Hu et~al.(2008)Hu, Koren, and Volinsky]{hu2008collaborative}
Yifan Hu, Yehuda Koren, and Chris Volinsky.
\newblock Collaborative filtering for implicit feedback datasets.
\newblock In \emph{2008 Eighth IEEE international conference on data mining},
  pages 263--272. Ieee, 2008.

\bibitem[Izzo et~al.(2021)Izzo, Smart, Chaudhuri, and Zou]{izzo2021approximate}
Zachary Izzo, Mary~Anne Smart, Kamalika Chaudhuri, and James Zou.
\newblock Approximate data deletion from machine learning models.
\newblock In \emph{International Conference on Artificial Intelligence and
  Statistics}, pages 2008--2016. PMLR, 2021.

\bibitem[Jain et~al.(2013)Jain, Netrapalli, and Sanghavi]{jain2013low}
Prateek Jain, Praneeth Netrapalli, and Sujay Sanghavi.
\newblock Low-rank matrix completion using alternating minimization.
\newblock In \emph{Proceedings of the forty-fifth annual ACM symposium on
  Theory of computing}, pages 665--674, 2013.

\bibitem[Jayaraman et~al.(2020)Jayaraman, Wang, Knipmeyer, Gu, and
  Evans]{jayaraman2020revisiting}
Bargav Jayaraman, Lingxiao Wang, Katherine Knipmeyer, Quanquan Gu, and David
  Evans.
\newblock Revisiting membership inference under realistic assumptions.
\newblock \emph{arXiv preprint arXiv:2005.10881}, 2020.

\bibitem[Jiang et~al.(2019)Jiang, Chiappa, Lattimore, Gy{\"o}rgy, and
  Kohli]{jiang2019degenerate}
Ray Jiang, Silvia Chiappa, Tor Lattimore, Andr{\'a}s Gy{\"o}rgy, and Pushmeet
  Kohli.
\newblock Degenerate feedback loops in recommender systems.
\newblock In \emph{Proceedings of the 2019 AAAI/ACM Conference on AI, Ethics,
  and Society}, pages 383--390, 2019.

\bibitem[Kearns(1995)]{kearns1995bound}
Michael Kearns.
\newblock A bound on the error of cross validation using the approximation and
  estimation rates, with consequences for the training-test split.
\newblock \emph{Advances in neural information processing systems}, 8, 1995.

\bibitem[Koren et~al.(2009)Koren, Bell, and Volinsky]{koren2009matrix}
Yehuda Koren, Robert Bell, and Chris Volinsky.
\newblock Matrix factorization techniques for recommender systems.
\newblock \emph{Computer}, 42\penalty0 (8):\penalty0 30--37, 2009.

\bibitem[Legislature(2018)]{california2018california}
California Legislature.
\newblock California legislative information.
\newblock \emph{Title 1.81. 5 California Consumer Privacy Act of 2018}, 2018.

\bibitem[Li et~al.(2022)Li, Zheng, Chen, and Liu]{li2022making}
Yuyuan Li, Xiaolin Zheng, Chaochao Chen, and Junlin Liu.
\newblock Making recommender systems forget: Learning and unlearning for
  erasable recommendation.
\newblock \emph{arXiv preprint arXiv:2203.11491}, 2022.

\bibitem[Liu et~al.(2015)Liu, Wang, and Smola]{liu2015fast}
Ziqi Liu, Yu-Xiang Wang, and Alexander Smola.
\newblock Fast differentially private matrix factorization.
\newblock In \emph{Proceedings of the 9th ACM Conference on Recommender
  Systems}, pages 171--178, 2015.

\bibitem[Long et~al.(2020)Long, Wang, Bu, Bindschaedler, Wang, Tang, Gunter,
  and Chen]{long2020pragmatic}
Yunhui Long, Lei Wang, Diyue Bu, Vincent Bindschaedler, Xiaofeng Wang, Haixu
  Tang, Carl~A Gunter, and Kai Chen.
\newblock A pragmatic approach to membership inferences on machine learning
  models.
\newblock In \emph{2020 IEEE European Symposium on Security and Privacy
  (EuroS\&P)}, pages 521--534. IEEE, 2020.

\bibitem[Mansoury et~al.(2020)Mansoury, Abdollahpouri, Pechenizkiy, Mobasher,
  and Burke]{mansoury2020feedback}
Masoud Mansoury, Himan Abdollahpouri, Mykola Pechenizkiy, Bamshad Mobasher, and
  Robin Burke.
\newblock Feedback loop and bias amplification in recommender systems.
\newblock In \emph{Proceedings of the 29th ACM international conference on
  information \& knowledge management}, pages 2145--2148, 2020.

\bibitem[Massimi and Baecker(2010)]{massimi2010death}
Michael Massimi and Ronald~M Baecker.
\newblock A death in the family: opportunities for designing technologies for
  the bereaved.
\newblock In \emph{Proceedings of the SIGCHI conference on Human Factors in
  computing systems}, pages 1821--1830, 2010.

\bibitem[McSherry and Mironov(2009)]{mcsherry2009differentially}
Frank McSherry and Ilya Mironov.
\newblock Differentially private recommender systems: Building privacy into the
  netflix prize contenders.
\newblock In \emph{Proceedings of the 15th ACM SIGKDD international conference
  on Knowledge discovery and data mining}, pages 627--636, 2009.

\bibitem[Mohri and Talwalkar(2011)]{mohri2011can}
Mehryar Mohri and Ameet Talwalkar.
\newblock Can matrix coherence be efficiently and accurately estimated?
\newblock In \emph{Proceedings of the fourteenth international conference on
  artificial intelligence and statistics}, pages 534--542. JMLR Workshop and
  Conference Proceedings, 2011.

\bibitem[Neel et~al.(2021)Neel, Roth, and
  Sharifi-Malvajerdi]{pmlr-v132-neel21a}
Seth Neel, Aaron Roth, and Saeed Sharifi-Malvajerdi.
\newblock Descent-to-delete: Gradient-based methods for machine unlearning.
\newblock In Vitaly Feldman, Katrina Ligett, and Sivan Sabato, editors,
  \emph{Proceedings of the 32nd International Conference on Algorithmic
  Learning Theory}, volume 132 of \emph{Proceedings of Machine Learning
  Research}, pages 931--962. PMLR, 16--19 Mar 2021.
\newblock URL \url{https://proceedings.mlr.press/v132/neel21a.html}.

\bibitem[Papernot et~al.(2016)Papernot, Abadi, Erlingsson, Goodfellow, and
  Talwar]{papernot2016semi}
Nicolas Papernot, Mart{\'\i}n Abadi, Ulfar Erlingsson, Ian Goodfellow, and
  Kunal Talwar.
\newblock Semi-supervised knowledge transfer for deep learning from private
  training data.
\newblock \emph{arXiv preprint arXiv:1610.05755}, 2016.

\bibitem[Powles and Chaparro(2015)]{powles2015google}
Julia Powles and Enrique Chaparro.
\newblock How google determined our right to be forgotten.
\newblock 2015.

\bibitem[Rahimian et~al.(2020)Rahimian, Orekondy, and
  Fritz]{rahimian2020sampling}
Shadi Rahimian, Tribhuvanesh Orekondy, and Mario Fritz.
\newblock Sampling attacks: Amplification of membership inference attacks by
  repeated queries.
\newblock \emph{arXiv preprint arXiv:2009.00395}, 2020.

\bibitem[Rahman et~al.(2018)Rahman, Rahman, Lagani{\`e}re, Mohammed, and
  Wang]{rahman2018membership}
Md~Atiqur Rahman, Tanzila Rahman, Robert Lagani{\`e}re, Noman Mohammed, and
  Yang Wang.
\newblock Membership inference attack against differentially private deep
  learning model.
\newblock \emph{Trans. Data Priv.}, 11\penalty0 (1):\penalty0 61--79, 2018.

\bibitem[Recht(2011)]{recht2011simpler}
Benjamin Recht.
\newblock A simpler approach to matrix completion.
\newblock \emph{Journal of Machine Learning Research}, 12\penalty0 (12), 2011.

\bibitem[Rosen(2011)]{rosen2011right}
Jeffrey Rosen.
\newblock The right to be forgotten.
\newblock \emph{Stan. L. Rev. Online}, 64:\penalty0 88, 2011.

\bibitem[Sekhari et~al.(2021)Sekhari, Acharya, Kamath, and
  Suresh]{sekhari2021remember}
Ayush Sekhari, Jayadev Acharya, Gautam Kamath, and Ananda~Theertha Suresh.
\newblock Remember what you want to forget: Algorithms for machine unlearning.
\newblock \emph{Advances in Neural Information Processing Systems}, 34, 2021.

\bibitem[Sherman and Morrison(1950)]{sherman1950adjustment}
Jack Sherman and Winifred~J Morrison.
\newblock Adjustment of an inverse matrix corresponding to a change in one
  element of a given matrix.
\newblock \emph{The Annals of Mathematical Statistics}, 21\penalty0
  (1):\penalty0 124--127, 1950.

\bibitem[Shokri et~al.(2017)Shokri, Stronati, Song, and
  Shmatikov]{shokri2017membership}
Reza Shokri, Marco Stronati, Congzheng Song, and Vitaly Shmatikov.
\newblock Membership inference attacks against machine learning models.
\newblock In \emph{2017 IEEE symposium on security and privacy (SP)}, pages
  3--18. IEEE, 2017.

\bibitem[Tak{\'a}cs et~al.(2011)Tak{\'a}cs, Pil{\'a}szy, and
  Tikk]{takacs2011applications}
G{\'a}bor Tak{\'a}cs, Istv{\'a}n Pil{\'a}szy, and Domonkos Tikk.
\newblock Applications of the conjugate gradient method for implicit feedback
  collaborative filtering.
\newblock In \emph{Proceedings of the fifth ACM conference on Recommender
  systems}, pages 297--300, 2011.

\bibitem[Thudi et~al.(2021)Thudi, Jia, Shumailov, and
  Papernot]{thudi2021necessity}
Anvith Thudi, Hengrui Jia, Ilia Shumailov, and Nicolas Papernot.
\newblock On the necessity of auditable algorithmic definitions for machine
  unlearning.
\newblock \emph{arXiv preprint arXiv:2110.11891}, 2021.

\bibitem[Truex et~al.(2019)Truex, Liu, Gursoy, Yu, and
  Wei]{truex2019demystifying}
Stacey Truex, Ling Liu, Mehmet~Emre Gursoy, Lei Yu, and Wenqi Wei.
\newblock Demystifying membership inference attacks in machine learning as a
  service.
\newblock \emph{IEEE Transactions on Services Computing}, 2019.

\bibitem[{U.S. House. 117th Congress}(2021)]{congress_2021}
{U.S. House. 117th Congress}, Nov 4 2021.
\newblock URL
  \url{https://www.congress.gov/bill/117th-congress/senate-bill/3195}.

\bibitem[Uschmajew(2012)]{uschmajew2012local}
Andr{\'e} Uschmajew.
\newblock Local convergence of the alternating least squares algorithm for
  canonical tensor approximation.
\newblock \emph{SIAM Journal on Matrix Analysis and Applications}, 33\penalty0
  (2):\penalty0 639--652, 2012.

\bibitem[Villaronga et~al.(2018)Villaronga, Kieseberg, and
  Li]{villaronga2018humans}
Eduard~Fosch Villaronga, Peter Kieseberg, and Tiffany Li.
\newblock Humans forget, machines remember: Artificial intelligence and the
  right to be forgotten.
\newblock \emph{Computer Law \& Security Review}, 34\penalty0 (2):\penalty0
  304--313, 2018.

\bibitem[Voigt and Von~dem Bussche(2017)]{voigt2017eu}
Paul Voigt and Axel Von~dem Bussche.
\newblock The eu general data protection regulation (gdpr).
\newblock \emph{A Practical Guide, 1st Ed., Cham: Springer International
  Publishing}, 10\penalty0 (3152676):\penalty0 10--5555, 2017.

\bibitem[Waldman(2020)]{waldman2020cognitive}
Ari~Ezra Waldman.
\newblock Cognitive biases, dark patterns, and the ‘privacy paradox’.
\newblock \emph{Current opinion in psychology}, 31:\penalty0 105--109, 2020.

\bibitem[Woodbury(1950)]{woodbury1950inverting}
MA~Woodbury.
\newblock Inverting modified matrices (memorandum rept., 42, statistical
  research group), 1950.

\bibitem[Ye et~al.(2021)Ye, Maddi, Murakonda, and Shokri]{ye2021enhanced}
Jiayuan Ye, Aadyaa Maddi, Sasi~Kumar Murakonda, and Reza Shokri.
\newblock Enhanced membership inference attacks against machine learning
  models.
\newblock \emph{arXiv preprint arXiv:2111.09679}, 2021.

\end{thebibliography}
\bibliographystyle{plainnat}

%%%%%%%%%%%%%%%%%%%%%%%%%%%%%%%%%%%%%%%%%%%%%%%%%%%%%%%%%%%%%%%%%%%%%%%%%%%%%%%
%%%%%%%%%%%%%%%%%%%%%%%%%%%%%%%%%%%%%%%%%%%%%%%%%%%%%%%%%%%%%%%%%%%%%%%%%%%%%%%
% APPENDIX
%%%%%%%%%%%%%%%%%%%%%%%%%%%%%%%%%%%%%%%%%%%%%%%%%%%%%%%%%%%%%%%%%%%%%%%%%%%%%%%
%%%%%%%%%%%%%%%%%%%%%%%%%%%%%%%%%%%%%%%%%%%%%%%%%%%%%%%%%%%%%%%%%%%%%%%%%%%%%%%
\newpage
\appendix
\onecolumn

\section{Proof: Untrain-ALS and Retraining Share The Same Minimal Loss, Functionally.}
\label{app:proof}

Recall that in Alternating Least Squares (ALS), the loss function is the regularized matrix completion:
\begin{equation}
L_\mathrm{ALS}(\mathcal{D}_\mathrm{obs})= \sum_{u, i \in \mathcal{D}_\mathrm{obs}} c_{ui} (p_{ui} - x^\intercal_u y_i)^2 + \lambda(\sum_u||x_u||^2 + \sum_i ||y_i||^2)
\end{equation}
For any set of preference matrix $P$ , deterministic function $f_c$, let the removed dataset be $\mathcal{D}_\mathrm{rm}$. As we only remove explicitly observed data points, it is assumed that $\mathcal{D}_\mathrm{rm}\subset \mathcal{D}_\mathrm{obs}$. When we retrain, we substitute $\mathcal{D}_\mathrm{remain} = \mathcal{D}_\mathrm{obs} - \mathcal{D}_\mathrm{rm}$ for $\mathcal{D}_\mathrm{obs}$, and write the loss under retraining as

\begin{equation}\label{eq:retrain_loss}
L_\mathrm{ALS}(\mathcal{D}_\mathrm{remain})= \sum_{u, i \in \mathcal{D}_\mathrm{remain}} c_{ui} (p_{ui} - x^\intercal_u y_i)^2 + \lambda(\sum_u||x_u||^2 + \sum_i ||y_i||^2)
\end{equation}

When we untrain with Untrain-ALS, we set the confidence values manually to $0$ for the indices in the removal set. We thus have

\begin{equation}\label{eq:untrain_loss}
L_\mathrm{UntrainALS}(\mathcal{D}_\mathrm{obs}, \mathcal{D}_\mathrm{rm}) = \sum_{u, i \in \mathcal{D}_\mathrm{obs}} f_c^\mathrm{untrain}(c_{ui}) (p_{ui} - x^\intercal_u y_i)^2 + \lambda(\sum_u||x_u||^2 + \sum_i ||y_i||^2)
\end{equation}

where $f_c^\mathrm{untrain}(\cdot)$ transforms the confidence score. Using Kronecker delta $\delta$ for set membership, we have
\[f_c^\mathrm{untrain}(c_{ui}) =\delta_{(u,i)\in( \mathcal{D}_\mathrm{obs} \backslash \mathcal{D}_\mathrm{rm})}c_{ui} =  (1-\delta_{(u,i)\in \mathcal{D}_\mathrm{rm}})c_{ui} = c_{ui}-\delta_{(u,i)\in \mathcal{D}_\mathrm{rm}}c_{ui}.\]

Assuming the same removal and observations, we hereby call the two loss quantities on $\{\mathcal{D}_\mathrm{remain}, \mathcal{D}_\mathrm{obs},\mathcal{D}_\mathrm{removal}\}$ in Equation~\ref{eq:retrain_loss} $\mathrm{RETRAIN\_LOSS}$ and in Equation~\ref{eq:untrain_loss}  $\mathrm{UNTRAIN\_LOSS}$. We write $\mathcal{D}_\mathrm{removal}$ and $\mathcal{D}_\mathrm{rm}$ interchangeably.

Our manual zeroing results in

%\begin{align*}
\begin{equation*}
\begin{split}
\mathrm{UNTRAIN\_LOSS}
= \quad &
\lambda(\sum_{u\in \mathcal{D}_\mathrm{obs}}||x_u||^2 + \sum_i ||y_i||^2)
 +
\sum_{(u, i) \in \mathcal{D}_\mathrm{obs}\backslash \mathcal{D}_\mathrm{rm}} f_c^\mathrm{untrain}(c_{ui}) (p_{ui} - x^\intercal_u y_i)^2
\\\\& +
\sum_{u, i \in \mathcal{D}_\mathrm{rm}} f_c^\mathrm{untrain}(c_{ui}) (p_{ui} - x^\intercal_u y_i)^2
\\\\
=\quad&\lambda(\sum_{u\in \mathcal{D}_\mathrm{obs}}||x_u||^2 + \sum_i ||y_i||^2)
 +
\sum_{(u, i) \in \mathcal{D}_\mathrm{remain}} c_{ui} (p_{ui} - x^\intercal_u y_i)^2
\\\\& +
\sum_{u, i \in \mathcal{D}_\mathrm{rm}} (0) (p_{ui} - x^\intercal_u y_i)^2
\\\\
=\quad &\lambda(\sum_{u\in \mathcal{D}_\mathrm{obs}}||x_u||^2 + \sum_i ||y_i||^2)
 +
\sum_{(u, i) \in \mathcal{D}_\mathrm{remain}} c_{ui} (p_{ui} - x^\intercal_u y_i)^2
\\\\
=\quad & \mathrm{RETRAIN\_LOSS}.
\\
\end{split}
\end{equation*}
%\end{align*}

Our objective thus makes our unlearning method \emph{exact} rather than approximate. Recall that the holy grail of unlearning is to approximate retraining. Under these modifications to $p_{ui}$ and $c_{ui}$, we find the loss function of Untrain-ALS is functionally equivalent to re-training, derived in Appendix~\ref{app:proof}
. The extraneous terms relating to removal data is fully zero-ed. We thus claim that optimizing Untrain-ALS \emph{can} achieve the same loss as retraining without the removal data.

\begin{remark}
Whether that minimal loss is achieved, and whether the solutions at minimal loss are necessarily equivalent (or up to a rotation) are not guaranteed from this analysis.
\end{remark}

\section{Experimental Parameters}
\paragraph{Alternating least squares.}
\label{app:als}
$P$ is a preference matrix, which could be binarized values of 1 (like) or 0 (dislike). A confidence score $c_{ui}=f_c(p_{ui})$, where $f_c$ is deterministic. In our experiments, $c_{ui} = 1$ if $p_{ui} = 1$, and 0 or very small otherwise. In the paper it is assumed that $f_c(p_{ui}) = 1 + \alpha p_{ui}$ with a suggested $\alpha=40$. Each experiment starts with new seed, including train-test split and ALS initializations, unless otherwise mentioned. Graphs are made with 5 runs.

The removal set $\mathcal{D}_\mathrm{removal}$ is assumed to be uniformly sampled from $\mathcal{D}_\mathrm{obs}$ without replacement.

The number of ALS passes ("epoch" or "iter") is the only tunable parameter for fitting base models. We assume a 99-1 split of train-test, and select epochs based on the best fit validated AUC.

\paragraph{Untrain-ALS Experiments}
We use MovieLens datasets~\citep{bennett2007netflix, harper2015movielens} as our preference matrices. For membership inference, the sensitivity issue is severe on larger models, therefore we illustrate with smaller datasets. Unless otherwise specified, we use parallel implementations for Alternating Least Squares, with conjugate gradient speedup~\citep{hu2008collaborative,takacs2011applications}; without our inverse adjustments, we only compare the number of iterations between untraining and retraining.
%The Python implementation is provided at \url{https://anonymous.4open.science/r/aistats_2023-7E20}.

%The removal set, $\mathcal{D}_\mathrm{removal}$, is held out as a fraction of total observations.
All datasets have a 1:99 test-train split before data removal, so the heldout set is sampled first. Removal fractions ($\%$) are denominated in explicitly observed entries. For training and evaluating recommendation models themselves, we use area-under-curve (AUC), which is more appropriate than downstream recommendations across configurations. For membership attacks, we use vulnerability measures derived from membership inference accuracies~\citep{shokri2017membership}. We show training and untraining across iterations, and evaluate random removal fractions at every 5\% increments.

To measure the efficacy of Untrain-ALS, we note two baselines: 1. The base model $\mathcal{M}_\mathrm{undeleted}$, which is trained to completion and used to initialize unlearning. It marks ideal accuracy. 2. Re-training from scratch without the removal data, $\mathcal{M}_\mathrm{retrained}$, as the baseline for vulnerability.
\paragraph{Membership inference.} The numbers of iterations for the base models are chosen for validated best model fit, as to be expected for practical deployments.

We use the 50-50 split for test-train on removal and remaining datasets for each appropriate removal fraction, meaning that $50\%$ of the removal data is used in training while the rest is used to validate. The best AUC is taken on the removal data for reporting each model's membership attack accuracy.

\section{Base Model}

\begin{figure}[t]
  \centering
  \includegraphics[width=0.78\textwidth]{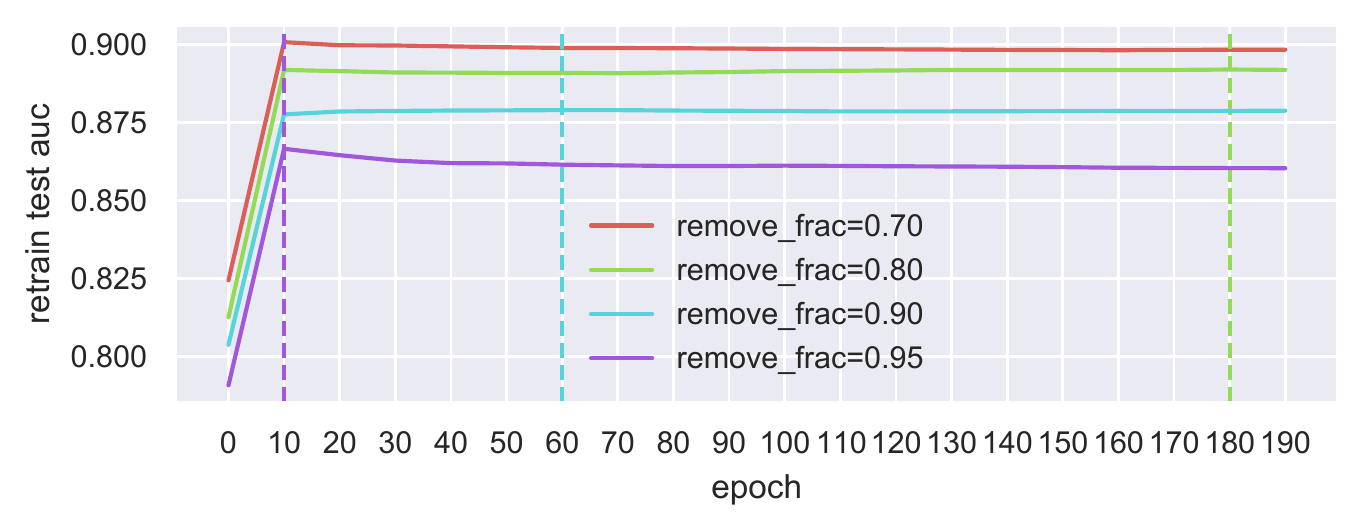}
  \caption{\textbf{Re-training dynamics.} Across different iterations of re-training from scratch, and 4 different fractions of removal, each final model's area-under-curve on test set on MovieLens-1M.}
  \label{fig:auc_1m_retrain_app}
\end{figure}

Figure~\ref{fig:auc_1m_retrain} and Figure~\ref{fig:auc_1m_retrain_app} shows that even with large removal fractions the base model can still perform well. The dotted vertical lines mark the number of iterations that achieve the best fit for each model. As shown, the less the remaining data, generally the earlier the convergence. 10 passes are sufficient only if removal fraction is large ($>70\%$). For small fractions of removal, the best fit tends to be between $[40,70]$ passes on MovieLens-1M. In comparison, Untrain-ALS only takes $[10,45]$ iterations.

% chart

\section{Inherent Privacy (Extended)}
\label{app:inherent}
Is there a scenario where the recommendation model is "robust" to random user deletion, thus requiring no additional work to unlearn a small subset of training data? Intuitively, dimensionality reduction should result in models with low capacity to memorize. Arguably, high empirical performance also relates to inductive biases: if datasets are well-described by the learned low rank parameters (that exhibit good generalization), it should imply that the model's inductive bias is not to memorize. We make concrete these intuitions in the context of matrix completion for user-movie preferences.
%Lacking proper analysis we may see that most of the data is private, but the small set of requested samples to delete is not.

For the following proof sketches, we assume that data removals are independently selected i.e $\mathcal{D}_\mathrm{removal}$ to be sampled from $\mathcal{D}_\mathrm{obs}$ randomly without replacement.

\paragraph{Validation implies robustness to missing data.}
First, a key observation: in implicit feedback datasets, each unobserved (and deleted) user-movie interaction is changed to $0$. The empirical validation of the model relies on a train-test split that follows the same zeroing convention.

As the mechanism for selecting missing feedback is equivalent to selecting a \emph{held-out} set, any argument for in-domain generalization from appropriate calibration by~\citet{kearns1995bound} and ~\citet{blum1999beating} would imply low prediction losses on missing data for both retrained and undeleted models.

Recall that membership inference needs to succeed by discriminating the predictions from removal data and the remaining data. Varying data splits, a well-calibrated model has similar expected losses. Because optimizing area-under-curve (AUC) is used for both 1. thresholding membership inference model on the removal data and 2. on remaining validation data on the base retrained recommendation model, we have $P_\mathrm{retrained}(p_{ui} = 1| (u,i)\sim \mathcal{D}_\mathrm{rm})\approx P_\mathrm{retrained}(p_{ui} = 1| (u,i)\sim \mathcal{D}_\mathrm{obs}) = \mathrm{AUC}_\mathrm{retrain}$. For each model, the approximation is directly relatable to validation loss.

If the base model is highly accurate i.e. has high AUC, the nonnegative loss contribution from removal data is further limited. Empirically, most recommendation data achieves high AUC even with large fractions of data removed. As membership inference needs to discriminate two sets of small, non-negative numerical losses of similar means, the task is inherently hard.

Roughly speaking, a well-validated model indeed implies robustness to deletion of a small fraction of data. Implicit feedback models are unique, where cross-validated performance implies an upperbound on the expected removal data's loss contribution, provided that the deletions are independent.

\begin{remark} Though this property makes empirical evaluation for individual privacy harder, it does also mean that the work towards validation and calibration applies directly towards model robustness against deletion. Even though model noise is inevitable in real-world setting, this insight greatly reduces the expectation that there is unknown privacy risk that result from deletion, as all training data is already observed (and presumably pre-selectable for validation).
\end{remark}

\paragraph{Uniqueness of matrix completion implies robustness to missing data.}
In light of that, we rehash ~\citet{recht2011simpler}'s work to recommendation data. We hereby ignore the coherence requirement on the raw data, which is likely untestable in practice (despite fast approximations~\citep{drineas2012fast, mohri2011can}); instead, assume the row and column spaces have coherences bounded above by some positive $\mu_0$, as assumptions in Theorem 2 in ~\cite{recht2011simpler}. The setup readily applies to our problem. As we assuming that the observations are indeed low-rank, the recovery of the true matrix is certainly robust to small fractions of random deletions.

For preference matrix $P$ of dimension $m\times n$ where $m<n$, assume that underlying ground truth matrix $M$ records the true preferences. Because the preferences are low rank, there is rank $r$ and a singular value decomposition $M= U\Sigma V^*$. As any preference entry is bounded, we trivially obtain the constant value $\mu_1 := \nicefrac{mn}{r}$; in practice $\mu_1 \geq 1e5$ or greater for very sparse datasets.

Assuming a threshold probability is chosen, so that the resulting matrix completion $UV^*$ to the problem

\begin{equation}
\begin{aligned}
\,\min_{U,V} \quad & ||X||_\mathrm{nuclear}& \quad\\
\textrm{s.t.} \quad & P_{ui} = M_{ui} & (u, i)\in \mathcal{D}_\mathrm{obs}\\\\
\end{aligned}
\end{equation}
is unique and equivalent to $M$ with the given probability when the number of uniformly random observations reaches $\mu_2 m n$ for $\mu_2 \leq 1$. This re-hashes the original result without the explict writeout of the bounds on $\mu_2$ that depends on $\{\mu_0, \mu_1, r, m, n\}$ and the chosen probability threshold.

Let $|\cdot|$ denote set cardinality and let $q$ be the fraction of missing data upon user-requested deletions, so that $|\mathcal{D}_\mathrm{removal}| = q |\mathcal{D}_\mathrm{obs}|$. Given $\mu_2$, missing data simply subtracts from the number of total observations. When the size of the remaining data, $(1-q) |\mathcal{D}_\mathrm{obs}|$, is above $\mu_2 mn$, the recovery is yet unique. That means our missing data does not change the uniqueness condition, if $q\leq 1-\frac{\mu_2}{ |\mathcal{D}_\mathrm{obs}|}mn$.

Finally, we want show that for sufficient number of observations, matrix completion solutions are inherently robust. Consdier the retrained and undeleted models. Our results show that they may have the same decomposition under our assumptions, meaning that retraining would not alter the resulting recommendation system in terms of recovered entries i.e. predictions downstream. Their empirical privacy is thus equivalent, meaning the undeleted model is as private as the retrained model.

\begin{remark}
Unfortunately, the bound is often vacuous, as the real world data is far sparser than what the theoretics posit i.e. $\mu_2\propto \mu_1^2$ while $\mu_1$ is too large. Additionally, the minization is often performed using heuristic methods such as alternating least squares, where the uniqueness of the solutions is not guaranteed, even if the underlying un-regularized minimization is unique.

For practical privacy, the independece assumptions of random independent romoval can not be guaranteed; after all, many users will likely remove the most embarassing content from watch history.
\end{remark}

\section{Membership Inference Metric}
For our application context, the natural measure is whether an observer of model outputs can recover or guess what a user once sought to remove.

Divergence-based measures aim to see the downstream difference between untrained and retrained models using a divergence measure $D(P_\mathrm{retrain}||P_\mathrm{retrain})$, such as KL-divergence~\citep{golatkar2020eternal}. At evaluation it is hoped that $\forall (u,i)\in\mathcal{D}_\mathrm{removal}$,
\begin{equation}
\label{eq:naive_div}
P_\mathrm{retrain}(p_{ui} = 0) = P_\mathrm{retrain}(p_{ui} = 0).
\end{equation}

However in collaborative filtering, this objective is under-constrained, as the adversary can observe outputs outside of those in $\mathcal{D}_\mathrm{removal}$ which may be impacted through the removal process. Even if those removed data points remain similar in output, an adversary may still see from the remaining data some anomalies. Instead, suppose an eavesdropper who can observe all data that is observed, except for a particular entry $p_{u_0 i_0}$, we have $\forall (u,i)\in\mathcal{D}_\mathrm{obs}$,
\begin{equation}\label{eq:mi_div}
P_\mathrm{retrain}((u,i)\thicksim \mathcal{D}_\mathrm{removal}) = P_\mathrm{retrain}((u,i)\thicksim \mathcal{D}_\mathrm{removal}).
\end{equation}

Thus we use membership attack to empirically calibrate both sides, maximizing the probability of attack success for a given model, and then measure the difference between those optimal success rates. For an appropriately forgotten model i.e. complete and not-deleting, the membership attack rate should not increase for the "best guess" for any data removed from the preference matrix.

Two benefits ensue: 1. auto-calibration that is suitable for our threat model, when Equation~\ref{eq:naive_div} is uncalibrated, and 2. usability when we only have two models per data split, instead of relying on sampling from a distribution of models.
\section{Privacy Analysis (Extended Discussion)}
\subsection{Privacy Context, Threat Model, and the Legality of Data Removal}
\label{app:privacy}
User privacy is a complex issue deserving of nuanced debate. We hereby outline related concepts.
\paragraph{Privacy in "Netflix and Forget".}
The data flow in our privacy model originates from the user, while the adversary also includes the user. It deviates from common privacy notions such as preventing information extraction~\citep{diffie1979privacy}, or the Right To Be Forgotten~\citep{rosen2011right}.

However, our privacy motivation is a pragmatic user scenario. While being private from one's own recommendations is not considered "unauthorized", letting other users guess the original data with high likelihood constitutes as unauthorized after the data source is withdrawn.

Even though the legal implements of the right to be forgotten are limited, forgetting past records at user request is a natural form of privacy. While most cases discussed under the rightinvolve public records, ~\citet{powles2015google} argues that the system through which the information is surfaced is crucial. Though people may prefer personal data removed purely out of emotional reasons, computational systems often treat data with "decontextualized freshness":

\begin{quote}
They include prominent reminders that an individual was the victim of rape, assault or other criminal acts; that they were once an incidental witness to tragedy; that those close to them – a partner or child – were murdered. The original sources are often many years or decades old. They are static, unrepresentative reminders of lives past, lacking the dynamic of reality.~\cite{powles2015google}
\end{quote}

We thus take the right to be forgotten in the spirit of decaying information while giving users the autonomy over their data. When the data is forgotten, we expect the system to behave as though the data was not supplied in the first place. On the other hand, to devise an attack, we use membership attack under the model that an observer of the recommendation system should not be able to tell with high probability whether some information was \emph{removed}.

\textbf{Threat Model}
The data owners request random deletion of training data, to which the model owner respond by updating the model. An eavesdropper with access to the model outputs attempts to guess whether a data point had been removed.

\paragraph{Does machine learning need to implement the Right To Be Forgotten?}
The ability to remove personal digital records is grounded in normative ethics. In dealing with loss of loved ones, common bereavement guides suggest removing the audio retained from answering machines, as voices, unlike photos, are often recorded incidentally rather than for the sake of remembranace~\citep{massimi2010death}. To move on from grief, a human user ought to have the ability to remove past records that bring them horror and regret, including the records' downstream summaries or re-caps.

However, the current paradigms of the legal system in the United States, where many major techonology companies are based, do not support a comprehensive regulation on privacy specific to machine learning systems.%~\citep{KERRY}
The California Consumer Privacy Act (CCPA) ~\citep{california2018california} and the proposed congressional bill Consumer Online Privacy Rights Act (S.3195) ~\citep{congress_2021} forbid businesses from expanding processing of personal data beyond the intended use. They are, however, quite fresh and rarely enacted. Meanwhile, the more mature GDPR supports such removal of past records used in "automated decisions"~\citep{EUdataregulations2018}. Nevertheless, ~\citet{burt_2018} interprets that even though users usually need to consent to their data being used for training, removing it does not necessarily mean the models need to be retrained.

A case may hinge on whether the un-removed model will leak information about the data to remove~\citep{burt_2018}. While \citet{papernot2016semi, choquette2021label} have shown that many models being deployed such as large scale language models have concrete privacy risks, such tools of empirical evaluation is not accessible to the general public, especially when they rely on accessing the training process. At best they serve as self-checking tools for companies that decide to provide such feature, but not as a tool that can be incorporated into regulation. The current state of online privacy is thus in a state of disarray: a lot of private data is compromised, which are fed to machine learning models. Meanwhile, few regulations are put in place to deal with the downstream effect, and no publically accessible method to measure such privacy loss.

\subsection{Why Aren't Machine Unlearning Solutions Deployed In Machine Learning?}
As ~\citet{waldman2020cognitive} observes, deploying privacy features that match the user's cognitive model is not a priority for technology developers. While many users would likely remove historical records on YouTube or Netflix hoping for changed recommendations, few recommendation systems have transparent guarantees on unlearning user preferences.

\paragraph{Legal recognition} is the most ostensible obstacle: only a few privacy bills have been passed in America, where many major technology companies are located. Lacking any aforementioned privacy regulation specifically worded on artificial intelligence, there is little recourse for users who want their data removed from machine learning pipelines.

\paragraph{Industrial-scale computation} is one reason lobbists use against passing bills that compel real-time removal of user data. Retraining is considered expensive, thus bad for business. While it may be argued that user privacy holds priority over computation cost and model accuracy, there has yet been a compelling demonstration that industrial-scale recommendation models can be efficiently unlearned without hurting the bottom line. After all, large recommendation models are widely used in multiple downstream products, and are expensive to train and re-train.

\paragraph{Flexible unlearning.} Undoubtly, the holy grail of machine unlearning is to allow any model to forget arbitrary training data, as if it were re-trained from scratch. Such a method, which does not depend on a specific learning architecture, would have truly sweeping implications. Generic unlearning applies to a wide range of models, without incurring costly training time modifications, extensive check-pointing, or differential private training. Moreover, with the popularity of finetuning pretrained models for applications, the downstream model servers may not have access to the training procedure or original parameters to begin with. Unlearning without learning enables most trendy services to fortify their systems after performing finetuning.

Additionally, our work uncovers a different dimension of the issue: evaluations. We need a way to know when privacy is lost, and when privacy is perserved.

\subsection{Auditable Evaluations}
A mature unlearning system would need to have compelling and robust evaluations. We still lack a realistic and auditable alternative to membershio inference~\citep{thudi2021necessity}. When un-learning simulates re-training, the ground truth to compare against is clear and reasonable. Platforms and regulators would only need to communicate the following idea: data deletion from machine learning model is analougous to forgetting, acting as if the platform never received such data.

Against privacy risks, a defended model needs to be evaluated against the identified risk. Membership Inference aims to identify memorization of training data by a model, and has gained popularity in succeeding in uncovering privacy risks~\citep{shokri2017membership, rahman2018membership, truex2019demystifying, choquette2021label}. Typical membership inference uses a collection of samples that are not in the training data, feed them to the model, and take the outputs as the baseline negative training set. The positive training set is the data that the model has seen in the training set. Other membership inference methods have been developed, usually requiring access to the model or the training procedure or a more focused clean dataset~\citep{long2020pragmatic, rahimian2020sampling, ye2021enhanced}. The central idea is to make the empirical attack prediction more salient more powerful adversaries.

Recently, ~\citet{46702} took a different approach for large scale language models to test if a data point had been deleted~\citep{46702, izzo2021approximate}. This negative dataset is manufactured "poison" to the training procedure. The intuition is that if the model is prone to memorization, it would be able to reproduce the exact random string that was injected in the training set. The membership inference variant thus focuses on engineering a better dataset, thus making it more effective at uncovering memorization. While powerful in engineering a clear metric, this approach requires the model owner to audit from within.

Our scenario for recommendation privacy turns out especially revealing: common membership inference classification is not able to uncover privacy risk, even though the devised method is not information-theoretically private. Indeed ~\citet{jayaraman2020revisiting} calls for revisiting membership inference in real life, noting that it is not as powerful as an empirical measure. ~\citet{chen2021machine} points out that unlearning can, in fact, decrease privacy, highlighting the need for better evaluations. We thus join calls with ~\cite{thudi2021necessity} in calling for auditable algorithms that evaluate machine unlearning.

\bibliographystyle{icml2023}

\end{document}